\documentclass{article}

\usepackage{microtype}
\usepackage{graphicx}
\usepackage{subcaption}
\usepackage{booktabs} 
\usepackage{overpic}

\usepackage{hyperref}

\usepackage[accepted]{icml2026_fogen}

\usepackage{amsmath}
\usepackage{amssymb}
\usepackage{mathtools}
\usepackage{amsthm}

\usepackage[capitalize,noabbrev]{cleveref}

\theoremstyle{plain}
\newtheorem{theorem}{Theorem}[section]

\newtheorem{lemma}[theorem]{Lemma}

\theoremstyle{definition}
\newtheorem{definition}[theorem]{Definition}

\theoremstyle{remark}

\usepackage[textsize=tiny]{todonotes}

\icmltitlerunning{Understanding the Copying Behavior of DMD Students}

\begin{document}

\twocolumn[
  \icmltitle{Why Are DMD Students Lazy?\\ Understanding the Copying Behavior in Few-Step Distillation}



  \icmlsetsymbol{eqsup}{*}

\begin{icmlauthorlist} 
    \icmlauthor{Shucheng Li}{ox}
    \icmlauthor{Iolo Jones}{ox}
    \icmlauthor{Alexander Tong}{aithyra,eqsup}
    \icmlauthor{Michael M. Bronstein}{ox,aithyra,eqsup}
\end{icmlauthorlist}

\icmlaffiliation{ox}{Department of Computer Science, University of Oxford, Oxford, UK}
\icmlaffiliation{aithyra}{AITHYRA, Research Institute for Biomedical AI, Vienna, AT}

\icmlcorrespondingauthor{Shucheng Li}{shucheng.li@cs.ox.ac.uk}

  \icmlkeywords{Diffusion Model, Distillation, ICML}

  \vskip 0.3in
]



\printAffiliationsAndNotice{* Equal Supervision.}  

\begin{abstract}
  Distribution Matching Distillation (DMD) compresses pretrained diffusion models into efficient few-step generators by aligning their noised distributions across all scales. In principle, such distribution-level supervision remains agnostic to the teacher’s specific noise--data pairings; this provides the student the freedom to remap latent noise, a behavior consistently observed in low-dimensional settings. Surprisingly, we find that in high-dimensional settings, distilled students spontaneously reproduce the teacher’s original noise--data pairings—a phenomenon we term copying. We demonstrate that copying is neither a byproduct of adversarial objectives nor a result of teacher memorization. Instead, our evidence suggests that copying is an emergent property arising from the limited geometric freedom of the student model during high-dimensional distillation.
\end{abstract}

\section{Introduction}
\label{Sec1. Introduction}
Diffusion models have fueled significant advances across images, videos, and text~\cite{DDIM, DDPM, DiffInstruct, ImagenVideo, VSDvideogen, LDM}. However, high-quality sampling through stochastic differential equations (SDEs) is computationally expensive. Distillation methods, such as Distribution Matching Distillation (DMD)~\cite{DMD1, DMD2}, bypass this bottleneck by training single-step students to match the teacher’s distribution. 

While recent studies have uncovered reproducibility~\cite{emergenceandreproducibility}, where multiple \textit{trained }diffusion models with different architectures produce the same noise--data pairings, the noise--data pairings of \textit{distilled} students remain largely unexplored. In this work, we identify an unexpected behavior which we term \textbf{copying} for distillation in high-dimensional settings: students faithfully reproduce the teacher’s noise--data pairings, despite being trained on an objective that is \textit{pairing-indifferent}.

We demonstrate that copying is neither a trivial artifact of teacher memorization~\cite{whyDMdontmemorize} nor a consequence of auxiliary losses. Instead, we posit that copying emerges from the constrained degrees of freedom inherent in high-dimensional manifolds. Our experiments reveal that copying is most pronounced at the distribution boundaries and correlates with the convergence level of the teacher. These patterns emphasize that student distillation dynamics are fundamentally governed by the geometric structure of the teacher's mapping, providing a new perspective on the interaction between high-dimensional data geometry and generative distillation.

Our main contributions are as follows:
\begin{itemize}
\item \textbf{Observing and Quantifying Copying:} We establish copying as a non-trivial dynamical behavior by proving that the DMD objective is in principle invariant to the noise--data pairings learned by the student. We propose \textit{Pairing Inefficiency} ($\Delta_E$), a scale-invariant metric, allowing fair quantitative comparison of strengths of copying across datasets of disparate scales and resolutions.

\item \textbf{Analyzing triggering factors of copying:} We demonstrate a sharp empirical divergence in copying behavior between low- and high-dimensional settings. Through ablations, we rule out auxiliary objectives and teacher memorization as primary drivers for copying. We provide evidence on macroscopic and microscopic scales that characterize copying as a consequence of emergent high-dimensional geometric constraints. 
\end{itemize}

\section{Background}
\label{Sec2. Background}
\subsection{Diffusion Models}
\label{Sec2.1 Diffusion Models}
Diffusion models generate samples from an unknown data distribution $p_{\text{data}}$ by reversing a learned stochastic process that transforms $p_{\text{data}}$ into a prior noise distribution $p_{\varepsilon}$, typically approximately a scaled isotropic Gaussian $\mathcal{N}(0, \sigma^2 \mathbf{I})$. This process is characterized by a pair of forward and backward stochastic differential equations~\cite{song2021scorebased}.

In the Variance Exploding (VE) framework, the forward SDE evolves $x_0 \sim p_{\text{data}}$ into increasingly noisy latent variables $x_t$ over the time interval $t \in [0, T]$ according to a noise schedule $\sigma(t)$:

$$
\mathrm{d} x_t = \sqrt{2 \dot{\sigma}(t) \sigma(t)} \mathrm{d} \mathbf{W}_t,
$$

where $\mathbf{W}_t$ denotes a standard multi-dimensional Wiener process. The transition kernel for this process is given by $p_t(x_t | x_0) = \mathcal{N}(x_t; x_0, \sigma^2(t) \mathbf{I})$. The marginals $\{p_t\}_{t \in [0, T]}$ define a \textbf{probability path} starting at $p_0 = p_{\text{data}}$ and terminating at a distribution $p_T \stackrel{d} {\approx} p_\varepsilon :=\mathcal{N}(0,  \sigma^2(T) \mathbf{I})$, provided the noise variance scale $\sigma^2(T)\gg \sigma^2 _{\text{data}}$ the variance scale of the data distribution.

To sample from $p_{\text{data}}$, the probability path can be inverted by initializing $x_T=z \sim \mathcal N(0,\sigma^2(T)\mathbf{I})$ and solving a corresponding backward SDE, or its equivalent deterministic Probability Flow ODE:

\begin{equation}
\mathrm{d} x_t = -\dot{\sigma}(t) \sigma(t) s(x_t,t) \mathrm{d} t,
\label{PFODE}
\end{equation}

where $s(x_t,t) := \nabla_x \log p_t(x_t)$ represents the Stein score function for $\{p_t\}$. Since the ground-truth score function is inaccessible, it is typically estimated by a learnable time-conditioned neural network $s_\theta(x_t, t)$. While other frameworks such as Variance Preserving (VP) or Flow Matching~\cite{flowmatching} can be similarly formulated under this SDE template with differing drift and diffusion coefficients~\cite{EDM, principlesofDM}, we restrict attention to the VE framework with $\sigma(t)=t$ for ease of presentation.

Under the VE framework, since the transition kernel $p_t\left(x_t | x_0\right)$ is Gaussian, the Stein score $s\left(x_t, t\right)$ relates to the posterior expectation of the clean signal via Tweedie's formula: $s\left(x_t, t\right)=(\mathbb{E}\left[x_0 | x_t\right]-x_t)/{t^2}.$ This identity allows a reparametrization of score to stabilize learning. In particular, we typically train a neural network $x_{0, \theta}\left(x_t, t\right)$ to approximate $\mathbb{E}\left[x_0 | x_t\right]$, then set $$s_\theta\left(x_t, t\right)=\frac{x_{0, \theta}\left(x_t, t\right)-x_t}{t^2}.$$

In image dataset learning tasks, this denoiser $x_{0,\theta}$ is often implemented as a UNet with additional pre-conditioning to ensure learning stability~\cite{DMD2, EDM}.

\subsection{Distribution Matching Distillation}
\label{Sec2.2 Distribution Matching Distillation}
Distribution Matching Distillation~\cite{DMD1, DMD2} aims to overcome the computational bottleneck of multi-step inference by distilling the teacher diffusion model $s(x_t,t)$ into a single-step student generator $G_\theta(z)$, by aligning the student probability path $p_{\theta,t}$, defined as $$p_{\theta, t} := \textbf{Law}(G_\theta(z) + \sigma(t)\varepsilon)$$where $\varepsilon\sim \mathcal N(0,\mathbf I)$ is independent of $z\sim \mathcal N(0,\sigma^2(T)\mathbf I)$, with the teacher's probability path $p_t$ across all noise levels~$\sigma(t)$. This is achieved by minimizing a time-weighted integral of the Kullback-Leibler (KL) divergence, referred to as the \textbf{Distribution Matching (DM) loss}:

\begin{equation}
\label{DM loss}
L_{\text{DM}}(\theta) := \int w(t) \text{KL}(p_{\theta, t} \| p_t) \, \mathrm dt. 
\end{equation}

The gradient of this objective with respect to the student parameters $\theta$ can be expressed in closed form:

\begin{equation}
\label{DM gradient}
\nabla_\theta L_{\text{DM}}(\theta) = \mathbb{E}_{t, z, \varepsilon} \left[ w(t) \left( s_\psi(x_t, t) - s(x_t, t) \right) \frac{\partial G_\theta(z)}{\partial \theta} \right],
\end{equation}

where $x_t = G_\theta(z) + \sigma(t)\varepsilon$. Here, $s(x_t, t)$ is the pre-trained teacher score, and $s_\psi(x_t, t)$ is a learnable student score neural network that approximates the intractable student score~$\nabla_x \log ((G_\theta(z)\ast \mathcal N(0,t^2\mathbf I))(x))$ during distillation.

We initialize the student generator and student score with teacher’s weights, specifically setting $G_\theta(z) := x_0(z, T)$, and $s_\psi(x_t ,t)=(x_0(x_t,t)-x_t)/t^2$, providing a strong starting point for the distillation procedure. To maintain stability during distillation, the student score $s_\psi$ and generator $G_\theta$ are updated via a joint training schedule. In each iteration, $s_\psi$ is updated by minimizing a standard denoising score matching loss
$$
\min_\psi \mathbb{E}_{t, z, \varepsilon} \left[ \| s_\psi(x_t, t) +\frac{\varepsilon}{\sigma(t)} \|^2 \right],
$$
where $x_t=G_\theta(z)+\sigma(t)\varepsilon$, to ensure that $s_\psi$ accurately tracks the score of the current student probability path $p_{\theta,t}$. The student generator $G_\theta$ is updated every 5 iterations using the gradient from Equation~\ref{DM gradient}. Auxiliary regression or adversarial objectives may be added~\cite{DMD2, DMD1}.

\begin{figure*}[ht]
  \vskip 0.2in
  \begin{center}
    \centerline{\includegraphics[width=1.8\columnwidth]{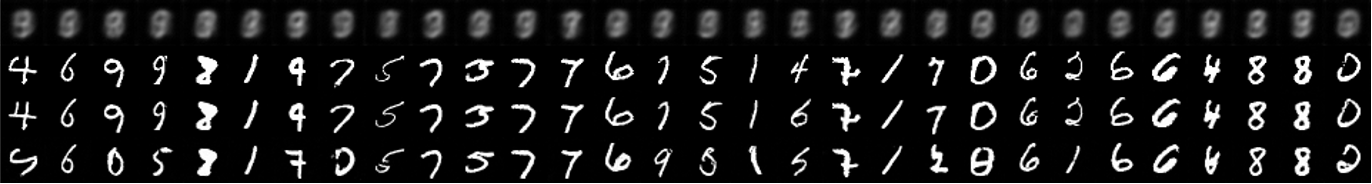}}
    
    \vskip 0.1in 
    \centerline{\includegraphics[width=1.8\columnwidth]{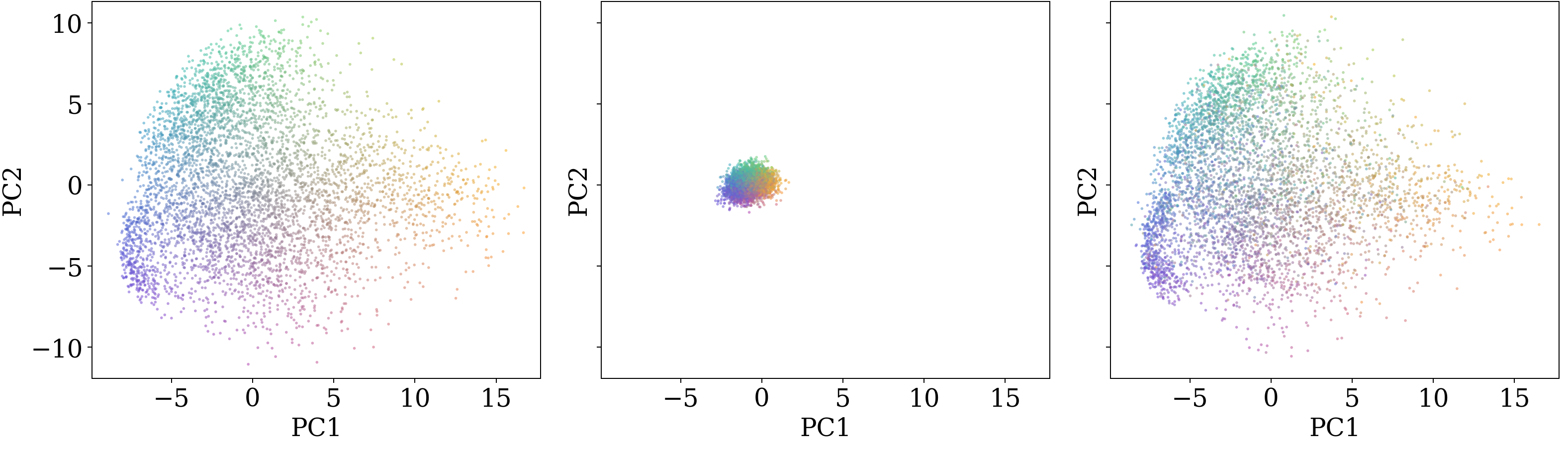}}
    \caption{
    \textbf{Significant copying in high-dimensional settings.} The distilled student on unconditional MNIST exhibits strong copying behavior. \textbf{Top:} Thirty image quadruples generated from random initial noise seeds \(z\). From top to bottom: teacher 1-step samples \(\Phi_1(z)\), teacher 8-step samples \(\Phi_8(z)\), teacher 32-step samples \(\Phi_{32}(z)\), and student 1-step samples \(G(z)\). \textbf{Bottom:} Visualization of 2000 triplets \((\Phi_8(z),\Phi_1(z),G(z))\) projected onto the two leading principal components of \(\Phi_8(z)\). Points generated from the same noise seed are assigned the same color across panels, showing that $G(z)$ occupies the same manifold location as the multi-step teacher \(\Phi_8(z)\). Pairing inefficiency is low at \(\Delta _E\approx 0.0367.\)
    }
    \label{uncondMNISTpairresult}
  \end{center}
  \vskip -0.2in
\end{figure*}

\section{The Copying Behaviour}
\label{Sec3. Copying}

We observe an unexpected phenomenon of Distribution Matching Distillation (DMD) in high-dimensional settings: a student model tends to faithfully reproduce the teacher's noise-data pairings pointwise, even though it is trained exclusively to match the teacher's \textit{distribution}. As illustrated in Figure~\ref{uncondMNISTpairresult}, over the unconditional MNIST learning task, the student generator $G_\theta(z)$ aligns very closely with the multi-step teacher target $\Phi_K(z)$ obtained by taking $K$ Euler steps\footnote{We fix $K=8$ across all experiments, as this setting suffices for high-quality synthesis, with larger values offering only marginal improvements in sample quality.}backwards with second-order corrections. We term this spontaneous alignment behavior \textbf{copying}.

\subsection{Measuring Copying by Pairing Inefficiency}
\label{Sec3.1. Measuring Copying by Pairing Inefficiency}

To formally quantify the strength of copying and compare it across different scales and dimensions, we introduce a scale-invariant measure $\Delta_E(\Phi_K, G_\theta)$ called pairing inefficiency. 

\begin{definition}[Pairing Inefficiency]
Let the initial noise be $z \sim p_{\varepsilon} = \mathcal{N}(0, \sigma^2(T) \mathbf{I})$. Let $p_{\Phi} = (\Phi_K)_{\#} p_{\varepsilon}$ and~$p_G = G_{\#} p_{\varepsilon}$ be the distributions learned by the teacher and student. The \textbf{Optimal Transport (OT) energy} and the \textbf{Distillation Transport (DT) energy} are defined as:
\begin{align}
    E_{OT}(\Phi_{K}, G) &:= \min_{\pi \in \Gamma(p_{\Phi}, p_G)} \int \|x - y\|_2^2 \, d\pi(x, y) \\
    E_{DT}(\Phi_{K}, G) &:= \int \|\Phi_K(z) - G(z)\|_2^2 \, d p_{\varepsilon}(z)
\end{align}
where $\Gamma(p_{\Phi}, p_G)$ is the class of all couplings with marginals~$p_{\Phi}$ and $p_G$. We define the \textbf{pairing inefficiency} as
\begin{equation}
    \Delta_E(\Phi_{K},G) := \frac{E_{DT}(\Phi_K,G)}{E_{OT}(\Phi_K,G)} - 1.
\end{equation}
\end{definition}

A small inefficiency $\Delta_E \approx 0$ implies strong copying, while a larger inefficiency $\Delta_E$ implies the teacher pairings are \textit{remapped}. In practice, we use a consistent Monte Carlo estimator $\Delta_E^{(N)}$ with $N=1000$ samples. Formal justifications including the non-negativeness and scale-invariance of $\Delta_E$ are provided in Appendix \ref{C.Derivations}.

\subsection{Copying is Not Required for Successful Distillation}
\label{Sec3.2 Copying is Not Required for Successful Distillation}

Crucially, we emphasize that copying is not a theoretical necessity of the DMD objective. The Distribution Matching loss $L_{\mathrm{DM}}$ is \textit{pairing-indifferent}: it only penalizes the discrepancy between the student distribution $p_G$ and the teacher's $p_\Phi$, remaining agnostic to the noise--data pairings the student learned.

\begin{lemma}
\label{lem:indifference}
Let $G_\theta$ and $G_{\theta'}$ be two student generators. Whenever $G_\theta(z) \stackrel{d}{=} G_{\theta'}(z)$, it follows that~$L_{DM}(\theta) = L_{DM}(\theta')$, even though in general $\nabla_\theta L_{DM}(\theta) \neq \nabla_\theta L_{DM}(\theta')$. Consequently, the stochastic optimization dynamics can drive students to converge toward pairings with vastly different inefficiencies $\Delta_E$ despite achieving similar distributional fidelity. 
\end{lemma}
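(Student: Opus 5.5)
The first claim is the substantive one, and I would prove it directly from the definition in Equation~\ref{DM loss}. The student path is defined as $p_{\theta,t} = \mathbf{Law}(G_\theta(z) + \sigma(t)\varepsilon)$ with $\varepsilon$ independent of $z$. This law is the convolution $(G_\theta)_{\#}p_{\varepsilon} \ast \mathcal N(0,\sigma^2(t)\mathbf I)$, so it depends on $\theta$ only through the pushforward $p_G = (G_\theta)_{\#}p_\varepsilon$. If $G_\theta(z) \stackrel{d}{=} G_{\theta'}(z)$, the two pushforwards coincide. Hence $p_{\theta,t} = p_{\theta',t}$ for every $t$.

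Since the teacher path $p_t$ does not involve $\theta$, each integrand satisfies $\mathrm{KL}(p_{\theta,t}\|p_t) = \mathrm{KL}(p_{\theta',t}\|p_t)$. Integrating against $w(t)$ gives $L_{DM}(\theta) = L_{DM}(\theta')$. This is the concrete sense in which the objective is pairing-indifferent: it is a functional of $p_G$ alone, and any measure-preserving reparametrization $G_{\theta'} = G_\theta\circ R$ with $R_{\#}p_\varepsilon = p_\varepsilon$ leaves it unchanged.

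For the ``in general the gradients differ'' part, a counterexample is enough. I would use Equation~\ref{DM gradient}, which weights the residual $s_\psi - s$ at $x_t$ by the Jacobian $\partial G_\theta(z)/\partial\theta$. Two parametrizations realizing the same law can have different Jacobians, and even different parameter dimension, so the gradients are not comparable objects in general. A simple concrete case is $G_\theta(z)=\theta z$ versus $G_{\theta'}(z) = -\theta' z$ with $\theta'=\theta$, or a rotated linear map $G_{\theta'}(z) = \theta R z$ with $R$ orthogonal. I would also take a non-optimal $\theta$, so the student's variance does not yet match the teacher's. In that case the two gradient expressions, evaluated with the same residual field but different Jacobian weightings, can be computed and seen to differ.

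The ``consequently'' clause is interpretive rather than a theorem. I would justify it by noting that the zero set of $L_{DM}$ (or any level set) is invariant under the infinite group of $p_\varepsilon$-preserving maps $R$. These maps change $E_{DT}$, and hence $\Delta_E$, while leaving $E_{OT}$ and the loss fixed. For example, composing with $R(z)=-z$ in the linear Gaussian case sends a pairing with $\Delta_E=0$ to one with large $\Delta_E$. Since minimizers span a range of $\Delta_E$ values, which one gradient dynamics selects is determined by the trajectory, through initialization and the parameter-dependent gradients, and not by the loss.

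The main obstacle is making the gradient statement precise. Literal equality of $\nabla_\theta$ and $\nabla_{\theta'}$ only makes sense when both live in the same parameter space. I would therefore state it for a fixed architecture with two parameter values realizing the same law, and verify the inequality explicitly in the one-dimensional linear Gaussian example.
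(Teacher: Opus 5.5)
Your proof of $L_{DM}(\theta)=L_{DM}(\theta')$ is the paper's argument. Independence of $\varepsilon$ and $z$ gives $p_{\theta,t}=p_{\theta',t}$ for all $t$, and each KL term depends only on laws. You differ on the two auxiliary claims, and there your route is the sounder one.

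On the gradient inequality, the paper only says that $\partial G_\theta(z)/\partial\theta$ differs, plus a remark that its rotating-circle family shows it at $\theta'=\theta+\pi$. That remark fails. Since $G_{\theta+\pi}=G_\theta\circ(-\mathbf I)$ for every $\theta$, we get $L_{DM}(\theta+\pi)=L_{DM}(\theta)$ identically, so the two gradients coincide; indeed $L_{DM}$ is constant along that family. Your fixed-architecture check does prove the claim. For $G_\theta(z)=\theta z$ one has $p_{\theta,t}=\mathcal N(0,\theta^2\sigma^2(T)+\sigma^2(t))$, so $L_{DM}(\theta)=f(\theta^2)$. Hence $\nabla L_{DM}(-\theta)=-\nabla L_{DM}(\theta)$, which differs from $\nabla L_{DM}(\theta)$ at every non-stationary $\theta$; this is exactly why the non-optimal $\theta$ you insist on is needed.

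Keep it in that form. Your sentence can also be read as comparing the architecture $z\mapsto\theta z$ with the reparametrized architecture $z\mapsto-\theta z$ (or $z\mapsto\theta Rz$) at the same parameter value. That comparison gives identical loss functions and hence identical gradients, for the same reason the paper's remark fails. Relatedly, at $\pm\theta$ the Jacobian $\partial_\theta G_\theta(z)=z$ is the same function of $z$. What actually differs is the joint law of $(G_\theta(z),\partial_\theta G_\theta(z))$, which the loss ignores but the gradient does not.

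For the ``consequently'' clause, your orbit of $p_\varepsilon$-preserving maps $R$ contains the paper's construction as the case $R=-\mathbf I$ on $\mathbb R^2$. The antisymmetry above also shows that in your linear example the sign of the initialization alone selects copying or reflection. The paper's static pair of minimizers does not make that point.

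What the paper's $\delta$-deformation buys is $E_{OT}>0$. At an exact fit $E_{OT}=0$ and $\Delta_E$ is undefined, so your linear example needs a slight mismatch. For instance, take teacher slope $c>0$ and students $z\mapsto\pm bz$ with $b\approx c$; these give $\Delta_E=0$ and $\Delta_E=4bc/(b-c)^2$ respectively.
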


\begin{figure*}[ht]
  \vskip 0.2in
  \begin{center}
    \centerline{\includegraphics[width=2\columnwidth]{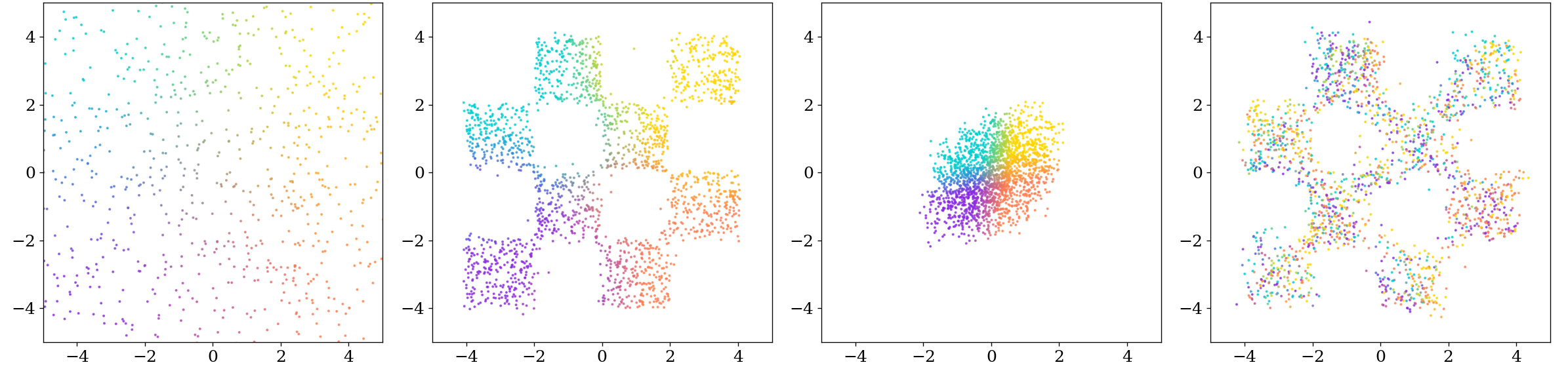}}
   \caption{
\textbf{Copying does not necessarily occur, and rarely occurs in low-dimensional space.} The distilled student on synthetic chessboard dataset exhibits strong remapping behavior. The dataset is a 2D chessboard dataset embedded in the first two coordinates of a four-dimensional ambient space. Panels from left to right show the initial Gaussian noise \(z\), the teacher 8-step samples \(\Phi_8(z)\), the teacher 1-step samples \(\Phi_1(z)\), and the student one-step samples \(G(z)\). Points generated from the same initial noise seed \(z\) are assigned the same color across panels. Visualization is obtained by projection onto the first two coordinates. The pairing inefficiency is high at \(\Delta _E\approx 8.55\).
}
    \label{2dchessboardpairing}
  \end{center}
\end{figure*}

\begin{figure}[t]
    \centering

    \begin{subfigure}{0.48\columnwidth}
        \centering
        \includegraphics[width=\linewidth]{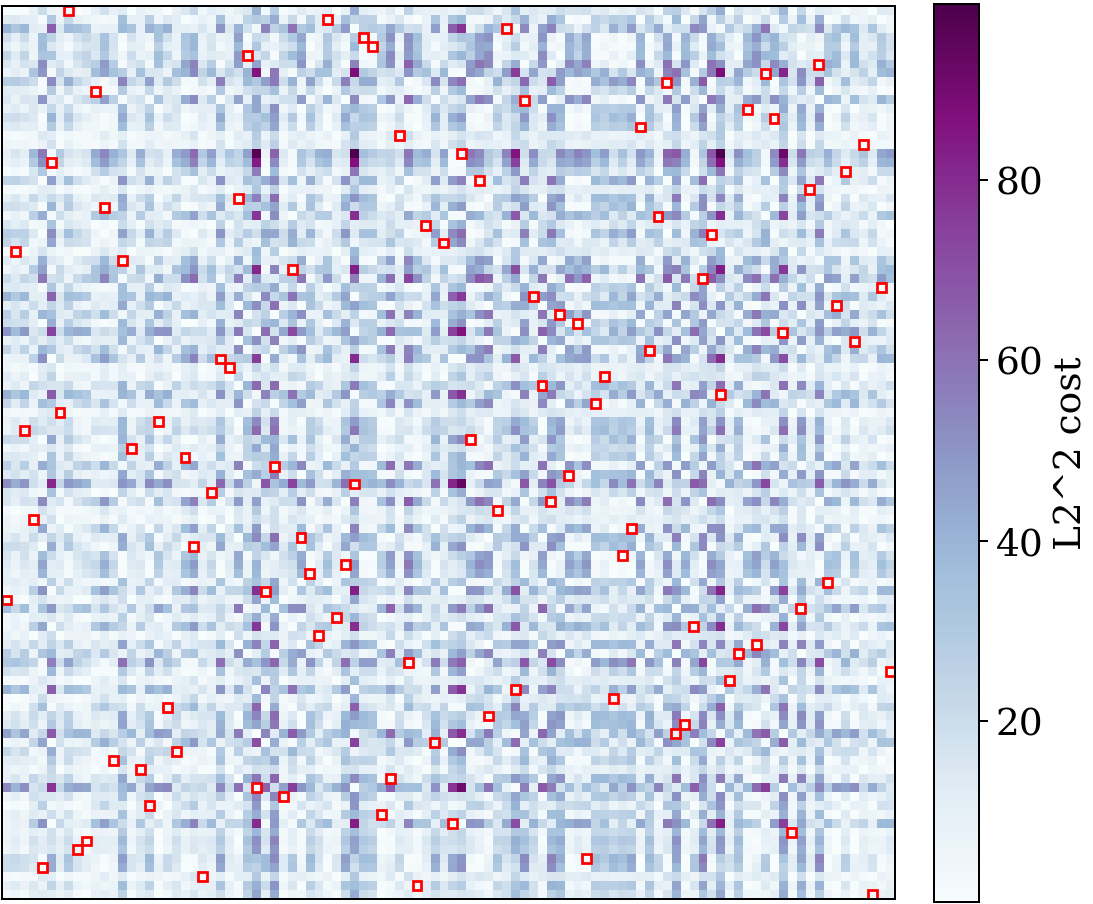}
        \caption{2D Chessboard}
        \label{fig:copy_chessboard}
    \end{subfigure}
    \hfill
    \begin{subfigure}{0.48\columnwidth}
        \centering
        \includegraphics[width=\linewidth]{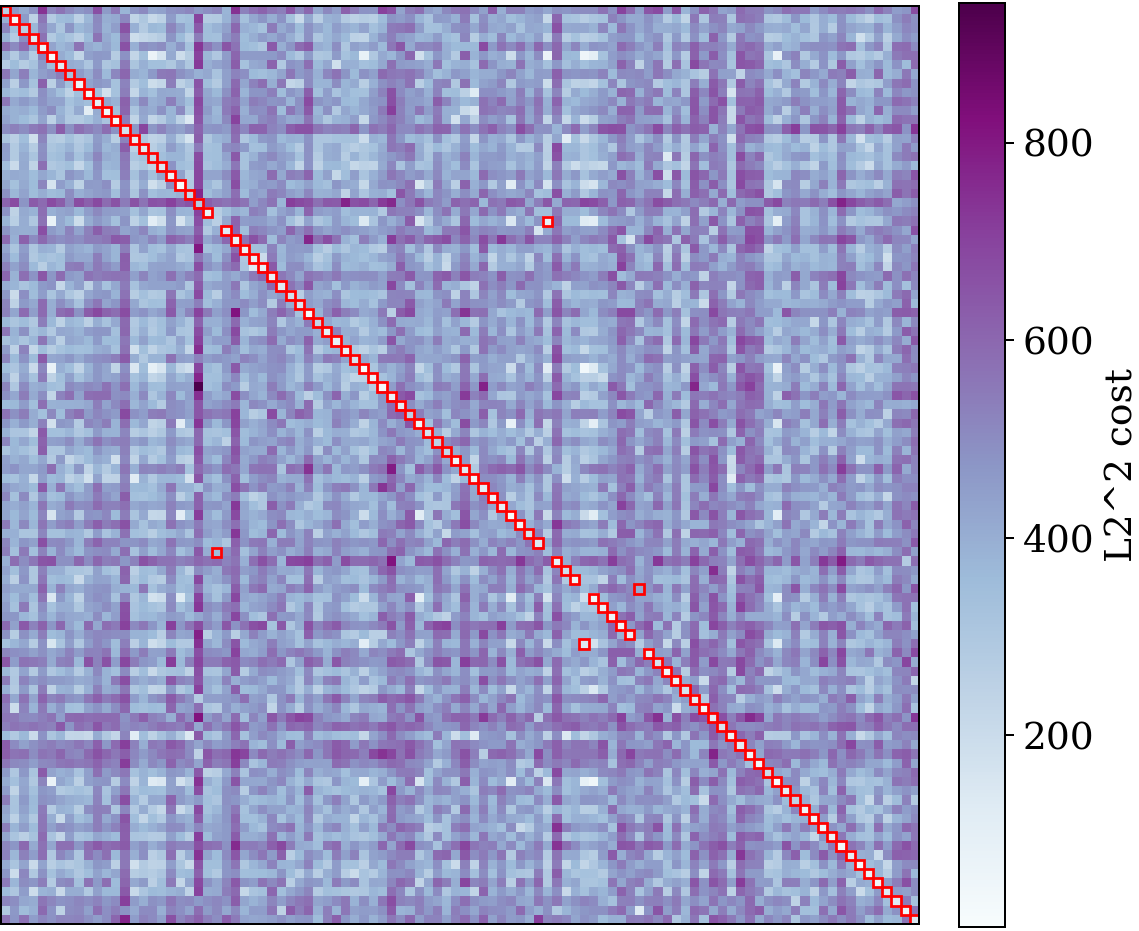}
        \caption{Unconditional MNIST}
        \label{fig:copy_mnist}
    \end{subfigure}

    \caption{
    \textbf{Copying is more pronounced in high-dimensional settings. } The heatmaps represent the pairwise squared $L_2$ distances between teacher-generated images $\{\Phi_K(z_i)\}_{i=1}^{100}$ and student-generated images $\{G(z_j)\}_{j=1}^{100}$ for the 2D chessboard dataset (left) and the unconditional MNIST dataset (right). The horizontal and vertical axes denote the teacher and student image indices, $i$ and $j$, respectively. The Optimal Transport (OT) pairing is highlighted with red boxes, while the Distillation Transport (DT) pairing corresponds to the diagonal. In the chessboard experiment, students mainly remaps the teacher's pairings, resulting in high pairing inefficiency ($\Delta_E \approx 8.55$). Conversely, in the unconditional MNIST experiment, the student mainly copies the pairings, exhibiting very low pairing inefficiency ($\Delta_E \approx 0.0367$).
}
    
    \label{2dchess and UncondMNIST pair comparison}
\end{figure}

A particular consequence of Lemma~\ref{lem:indifference} is that minimizing the distillation objective in Equation \ref{DM loss} does not require the student to satisfy $G_\theta(z) \approx \Phi_K(z)$ for most $z$ hence achieving low pairing inefficiency $\Delta _E\approx 0$. A student could match the teacher's distribution perfectly while arbitrarily remapping or reflecting the noise-to-data manifold. The proof of this lemma, along with an intuitive example where a non-copying student attains a global minimum of Objective \ref{DM loss}, is provided in Appendix~\ref{C.Derivations}.

\subsection{Copying Rarely Occurs in Low Dimensions}
\label{Sec3.3. Copying Rarely Occurs in Low Dimensions}

We have theoretically established that in principle a distilled student is not required to copy the teacher's noise-data pairings and to exhibit low pairing inefficiency. 

Indeed, we empirically observe that on low-dimensional datasets, copying is rarely present and pairing inefficiency is high. For example, we define $p_{\text{data}}$ as a two-dimensional $4 \times 4$ chessboard distribution embedded within the first two coordinates of $\mathbb{R}^4$. We first train a teacher diffusion model, parameterized by an MLP with five hidden layers of width 384, and subsequently distill a single-step student generator until convergence. As shown in Figure~\ref{2dchessboardpairing} and Figure~\ref{2dchess and UncondMNIST pair comparison}, while the student successfully recovers the target distribution $p_{\text{data}}$, it exhibits significant remapping behavior with high pairing inefficiency $\Delta_E\approx 8.55$, confirming that the student has found a valid distributional fit that deviates from the teacher's original trajectories.

Similarly significant remapping is consistently observed across other standard low-dimensional synthetic datasets (see Appendix \ref{A. Copying on Other Datasets}).

\subsection{Copying Frequently Occurs in High Dimensions}
\label{Sec3.4 Copying Frequently Occurs in High Dimensions}
In stark contrast to low-dimensional settings, copying behavior emerges consistently in high-dimensional tasks. We train a clean-prediction teacher diffusion model using a standard UNet architecture on the unconditional MNIST dataset consisting of $70,000$ digit images. The teacher is trained from scratch for $8192$ iterations, after which we distill a single-step student generator until convergence. 

Compared to the toy settings, the distilled student on conditional MNIST is now visually substantially more prone to copying (see Figure~\ref{uncondMNISTpairresult} and Figure~\ref{2dchess and UncondMNIST pair comparison}), achieving an extremely low pairing inefficiency $\Delta_E \approx 0.0367$.

We also observe that such strong copying in high dimensional settings is not exclusive to natural images; it is equally prevalent in artificial high-dimensional datasets lacking natural image structures, as well as in conditional generation settings. For a comprehensive overview of these cases, see Appendix~\ref{A. Copying on Other Datasets}.

\begin{figure*}[ht]
    \centering
    \begin{minipage}{0.49\textwidth}
        \centering
        \includegraphics[width=\linewidth]{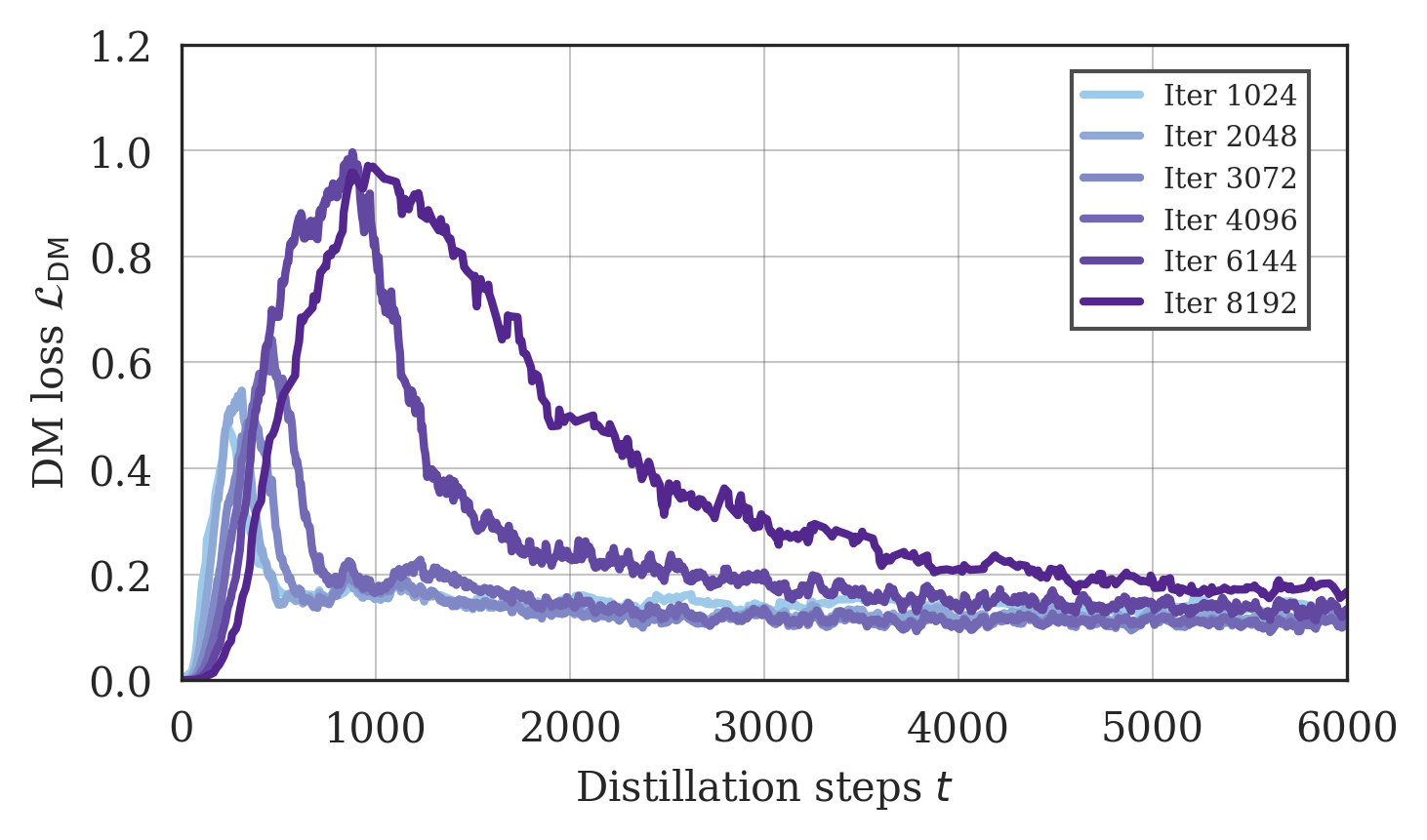}
    \end{minipage}
    \hfill 
    \begin{minipage}{0.49\textwidth}
        \centering
        \includegraphics[width=\linewidth]{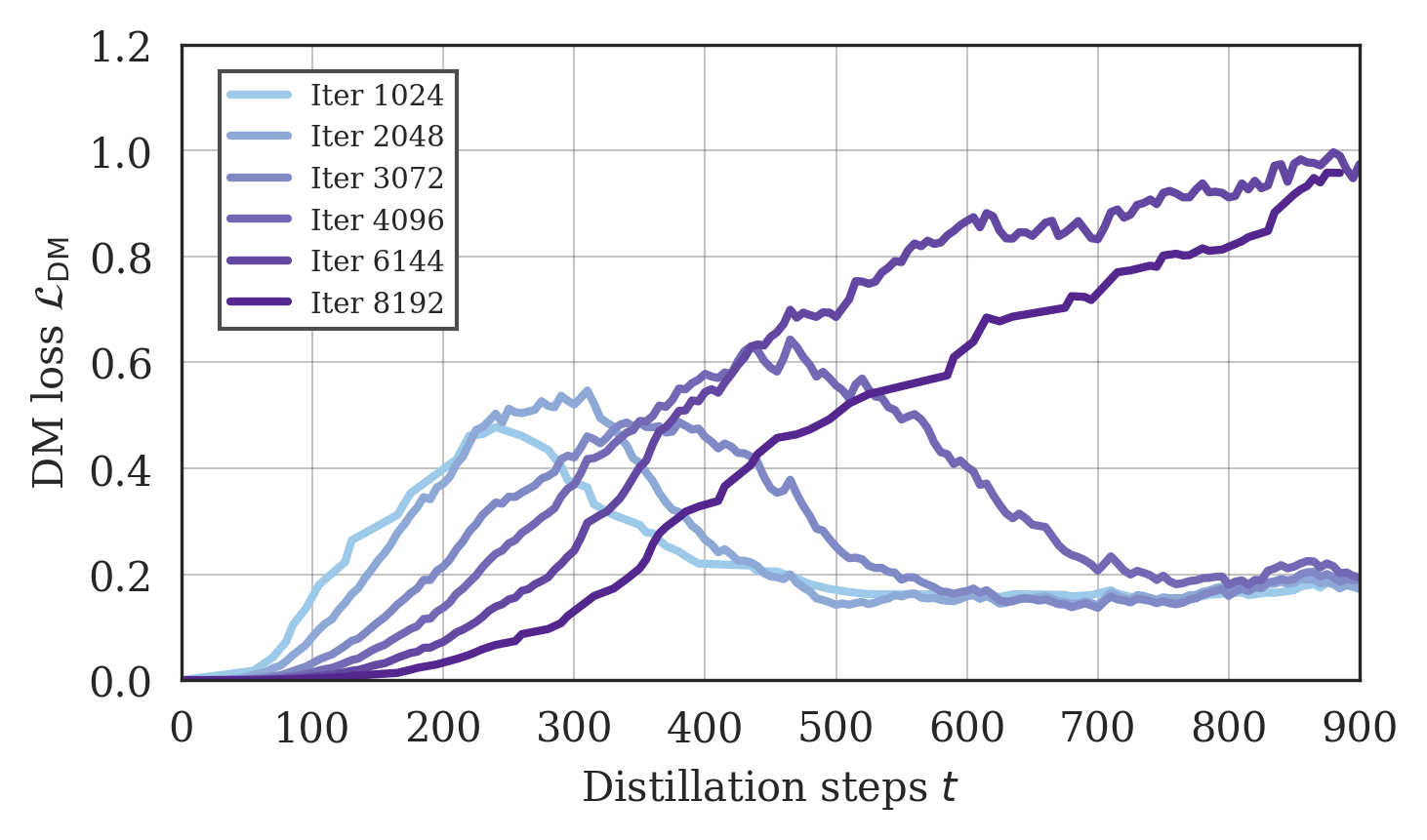}
    \end{minipage}
    
    \vspace{0.1in} 
    \caption{
   \textbf{ Two stages of Distribution Matching Distillation. } We plot the distribution matching loss of all students (initiated at different teacher snapshots) on the unconditional MNIST dataset, throughout distillation. In all cases, the DM loss exhibits a characteristic two-stage increase-decrease evolution. Students are initialized from teacher checkpoints trained for 1024, 2048, 3072, 4096, 6144, and 8192 iterations. All students are distilled for 50K iterations. The left panel shows the loss evolution up to 6K distillation iterations, and the right panel zooms into the first 900 iterations.
    }

    \label{MNIST_dm_loss}
    \vspace{-0.1in}
\end{figure*}

The consistent selection of copying solutions with $\Delta_E \approx 0$ over other valid remappings is thus an emergent property that warrants further investigation.

\section{Analyzing the Mechanisms Behind Copying}
\label{Sec 4. Analyzing the Mechanisms Behind Copying}

In this section, we analyze the mechanisms behind the copying behavior. For simplicity of discussion, we focus on the unconditional MNIST setting.

\subsection{Removal of Intuitive Explanations}
\label{Sec 4.1. Removal of Intuitive Explanations}

\subsubsection{Copying Occurs Without Adversarial Objective}

The models in \citet{DMD2} on which significant copying is observed were trained with an additional adversarial loss,
\[
L_{\mathrm{GAN}}(\gamma)
:=
\mathbb E_{\substack{x\sim p_{\mathrm{data}}\\
\varepsilon, z,
t}}
\left[
-\log D_\gamma(x_t')
+
\log D_\gamma(x_t)
\right],
\]
where
\(x_t'= G_\theta(z)+t\varepsilon,\, x_t = x+t\varepsilon.\) This adversarial loss may implicitly contribute to the copying. 

However, our experiments use neither regression loss nor the adversarial objective, the copying phenomenon nevertheless still emerges consistently. This demonstrates that neither target regression nor adversarial training is necessary for copying to occur.

\subsubsection{Copying Occurs Without Memorization}
\label{Sec 4.2. Copying Follows Geometric Complexity}
One may suspect that the teacher model memorizes the training dataset, thereby encouraging the student to simply reproduce its outputs. For a generated datapoint~\(y\), we define its \textbf{memorization distance ratio}
\[
r(y)
:=
\frac{\|y-x^1(y)\|}{\|y-x^2(y)\|},
\]
where \(x^1(y)\) and \(x^2(y)\) denote respectively the nearest and second-nearest neighbours of \(y\) in the training dataset. A datapoint \(y\) is considered memorized if
\(
r(y)<r_{\mathrm{thres}}
\)
for some prescribed threshold \(r_{\mathrm{thres}}\)~\cite{whyDMdontmemorize}. We evaluated the trained teacher models by computing memorization distance ratios for randomly generated samples. As illustrated in Figure~\ref{teacher do not memorize}, none of the generated samples exhibit signs of memorization.

\subsection{Copying Follows Geometric Complexity}
\label{Sec 4.2. Copying Follows Geometric Complexity}

In this subsection, we present additional experiments and a more detailed analysis of when copying occurs and under what circumstances it becomes more significant.

We divide the distribution-matching distillation dynamics into two stages, as shown in Figure~\ref{MNIST_dm_loss}. In the first stage, the distribution-matching (DM) loss increases. This occurs because the initialized surrogate student score model~\(s_\psi(\cdot,t)\) gradually deviates from the teacher score~$s(\cdot,t)$ and more accurately approximates and tracks the score~\(\nabla_x
\log\left(
\left(G_\theta(z)\ast\mathcal N(0,t^2 \mathbf I)\right)(x)
\right)
\) of the current student probability path. Once~\(s_\psi\) begins consistently tracking the actual student score, distillation enters the second stage. At this point, the gradient in Equation~\ref{DM gradient} begins providing meaningful optimization signal for updating \(G_\theta\), so that $s_\psi$ gradually matches \(s\), progressively deforming student's one-step distribution toward the teacher's learned target distribution, decreasing the DM loss.

We conjecture that a higher degree of copying arises when, in the second stage of distillation, the student has limited freedom to deform its distribution while still preserving the target distribution implied by the teacher score.

To illustrate this intuition, consider a target uniform distribution \(U\) with density \(\rho=1/4\) supported on the square with vertices \((\pm1,\pm1)\). Within the interior of the square, the student may continuously perturb or remap noise--data pairings while still approximately preserving the target density. In this regime, there exists substantial geometric flexibility for transport. 

In contrast, near the boundary of the support, remapping becomes significantly more constrained. To preserve the uniform distribution while continuously deforming transport pairings near the edges, the student must satisfy additional geometric and continuity constraints induced by the boundary structure. Under limited expressiveness of a single-step generator\footnote{The student model in the chessboard experiment in contrast, is highly expressive, and could easily learn a remapped distribution. See Figure~\ref{2dchessboardpairing} and Section~\ref{Sec3.3. Copying Rarely Occurs in Low Dimensions}.}, such constrained deformations may be substantially harder to realize during optimization. Consequently, the optimization dynamics may favour preserving the teacher's original noise--data pairings in these regions, leading to stronger copying behavior. 

We provide one microscopic and one macroscopic piece of evidence in support of this conjecture.

\subsubsection{Micro level: Boundary Points Are More Likely Copied}

We observe that teacher samples further away from the bulk of training dataset are more likely to be copied by the student. In particular, Figure~\ref{BoundaryPtsMorelikelyCopied} shows that the student's relative displacement towards the teacher target,
\(
\|G(z)-\Phi_K(z)\|
-
\|\Phi_1(z)-\Phi_K(z)\|
\),
is moderately negatively correlated with the average distance between the teacher target and the training dataset,
\[
D(z)
:=
\operatorname{Avg}_{x\in\mathrm{train}}
\left(
\|\Phi_K(z)-x\|
\right).
\]

\begin{figure}[t]
    \centering

    \includegraphics[width=0.8 \columnwidth]{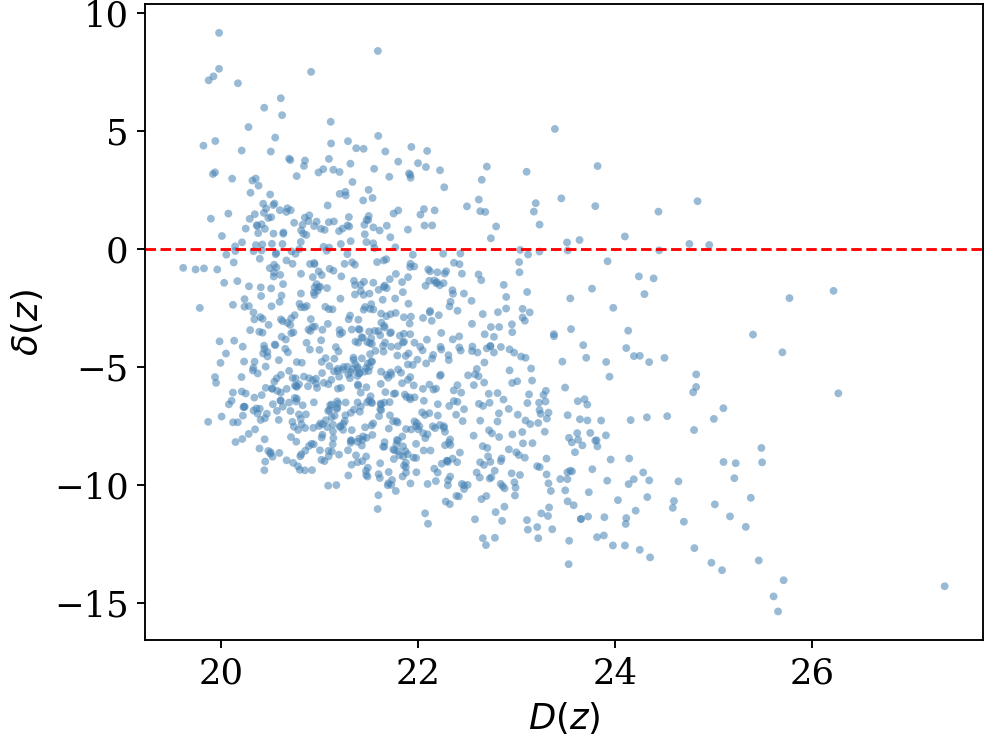}

    \caption{
    \textbf{Boundary Points are more likely copied.} For the student distilled from teacher trained on unconditional MNIST dataset for 8192 iterations, we plot with horizontal axis~$D(z)=\text{Avg}_{x\in\text{train}}(\|\Phi_K(z)-x\|)$ the average distance to training set, and vertical axis $\delta (z)=\|G(z)-\Phi_K(z)\|-\|\Phi_1(z)-\Phi_K(z)\|$ the student relative displacement towards the teacher target. Points below the red dashed line indicate aligned pairs. A larger negative displacement indicates stronger alignment.
    }
    
    \label{BoundaryPtsMorelikelyCopied}
\end{figure}

Intuitively, teacher samples $\Phi_K(z)$ with high $D(z)$ typically lie near sparse or extremal parts of the learned data manifold, where remapping while preserving the induced distribution may require satisfying stronger geometric or continuity constraints. Consequently, optimization dynamics appear to favour copying in these regions.

We note that this more pronounced copying phenomenon at extremeties of data manifold may be closely intertwined with recent insights from \citet{zhang2026generalization}, who demonstrated that trained generative models effectively capture local geometry over abundant data clusters while reverting to memorization on scarce, isolated points. We leave a formal characterization of this mechanistic correspondence and causality for future work.

\subsubsection{Macro Level: Longer Trained Teachers Are More Likely Copied}

We observe that, although the teacher models do not memorize, at a macroscopic level the degree of student copying increases with the number of iterations used to train the teacher model.

\begin{figure}[t]
    \centering
        \includegraphics[width=0.46\textwidth]{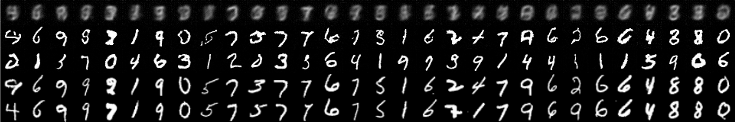}
        
        \vskip 0.08in
        
        \includegraphics[width=0.46\textwidth]{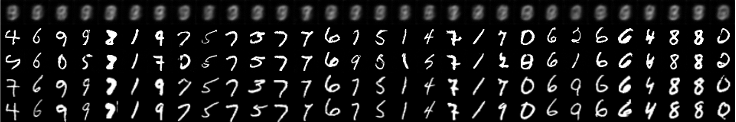}
        
        \vskip 0.08in
        
        \begin{subfigure}{0.23\textwidth}
        \centering
        \includegraphics[width=\linewidth]{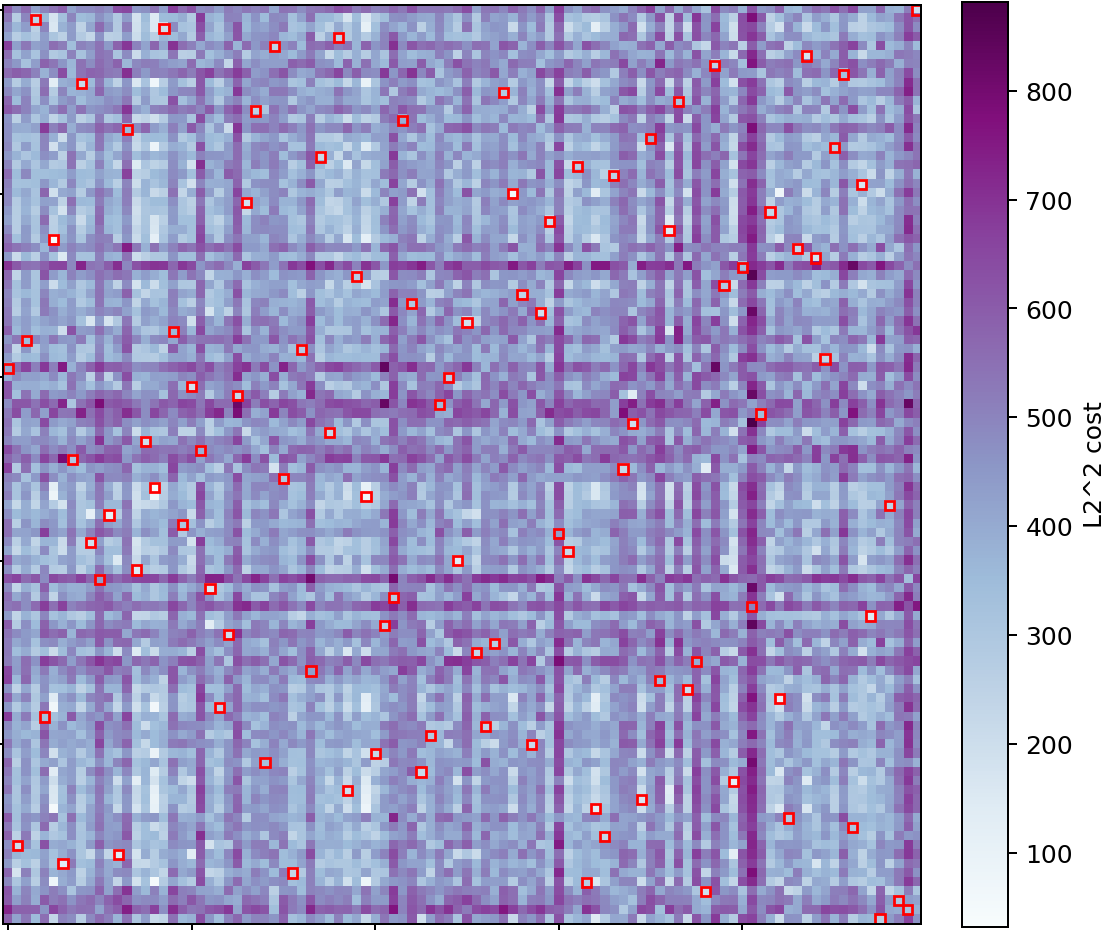}
        \caption{Teacher 1k Iterations}
        \label{fig:heatmap_1k}
    \end{subfigure}
    \hfill
    \begin{subfigure}{0.23\textwidth}
        \centering
        \includegraphics[width=\linewidth]{Figures/MNISTUncond/t8192_NoGAN/s1t8_ot_cost_matrix_heatmap.png}
        \caption{Teacher 8k Iterations}
        \label{fig:heatmap_8k}
    \end{subfigure}
        
        \caption{
        \textbf{Longer trained teachers are more likely copied. } \textbf{Top two panels:} Thirty image quintuples generated from random initial noise \(z\). Within each panel, from top to bottom, the rows show the teacher 1-step samples \(\Phi_1(z)\), teacher 8-step samples \(\Phi_8(z)\), student one-step samples \(G(z)\), the nearest training datapoints \(x^1(\Phi_8(z))\), and the second-nearest training datapoints \(x^2(\Phi_8(z))\). The upper panel corresponds to a teacher trained for 1024 iterations, and the lower panel corresponds to a teacher trained for 8192 iterations. \textbf{Bottom panels}: Visualization of pairing inefficiency via distance heatmaps. Heatmaps display pairwise $L_2$ distances between teacher $\{\Phi_K(z_i)\}_{i=1}^{100}$ and student $\{G(z_j)\}_{j=1}^{100}$ outputs for students distilled from 1024-step (left) and 8192-step (right) teachers. Red boxes denote the Optimal Transport (OT) pairing, while the diagonal represents the Distillation Transport (DT) pairing. At 1024 teacher iterations, the student remaps the pairings with inefficiency $\Delta_E \approx 1.05$, whereas at 8192 iterations, the student exhibits strong copying behavior with $\Delta_E \approx 0.0367$.
    }
    \label{WellTrainedTeachersLikelyCopied}
\end{figure}

We train teacher diffusion models on the unconditional MNIST dataset for varying numbers of iterations, and initialize student generators from these teacher snapshots for subsequent distillation. We observe that students distilled from later-stage teacher checkpoints preserve a substantially higher proportion of the teacher's original noise--data pairings (see Figure~\ref{WellTrainedTeachersLikelyCopied}, \ref{UncondMNIST 1248K longer train more likely copied}).

This observation is also consistent with our hypothesis. As training progresses, the teacher more accurately resolves the underlying target distribution, reducing excess diffusion-induced variability and progressively capturing finer geometric structure of the data manifold. These increasingly refined structures impose stronger constraints on the student during distillation, thereby reducing the flexibility available for large-scale remapping of noise--data pairings. Consequently, optimization dynamics become more favourable toward the copying behavior.

\subsection{Related Work}
\textbf{Memorization and reproducibility in diffusion models.} 
Prior works studied memorization, generalization, and reproducibility in diffusion models, showing that memorization occurs on excessively small datasets~\cite{emergenceandreproducibility, CDCFM, whyDMdontmemorize}, while reproducibility persists even in strongly generalizing regimes~\cite{understandgeneralizabilityofDMrequires, genofDMarisefromgeometryharmreps, emergenceandreproducibility}. These works focus on similarities between independently trained diffusion models, whereas our work studies copying behavior arising during student distillation.

\textbf{Distillation of diffusion and flow-based models.}
Existing distillation methods for diffusion models include DMD, VSD, Diff-Instruct~\cite{DMD1, DMD2, VSDvideogen, DiffInstruct}, while approaches to distill flow-based models include progressive distillation, consistency models, MeanFlow, and SplitMeanFlow~\cite{ProgressiveDistill, consistencymodels, meanflow, splitmeanflow}. Unlike flow-based distillation, where trajectory-level supervision is directly available, distribution-matching distillation only provides distribution-level supervision, theoretically allowing freedom for remapping noise--data pairings.

\textbf{Manifold structure of data distributions.}
The manifold hypothesis suggests that natural image distributions are concentrated near low-dimensional manifolds ~\cite{scorebasedGendetectmfld}. Prior works studied geometric and topological challenges arising in score-based generative models under this setting, including numerical stability, convergence, and manifold-related artifacts~\cite{scorebasedGendetectmfld, convergenceofDDMundermfld, convergenceofDMundermfldhighdim, deepgenmodelsthroughlensofmfldhyp}. Our observations confirm that such geometric constraints may also influence copying behavior during distillation.

\section{Conclusion}
In this work, we investigate the unexpected copying behavior exhibited by distribution-matching distilled student models, where the student reproduces the teacher's noise--data pairings despite having access only to distribution-level supervision.

Through extensive experiments, we show that copying is not caused by regression or adversarial objectives or by memorization of the training dataset by the teacher model. Empirical evidence instead supports the hypothesis that copying emerges when the student has limited freedom during distillation to deform its distribution while remaining aligned with the data distribution induced by teacher's learned score.

\section{Limitations and Future Work}
While our experiments rule out several intuitive explanations and suggest a geometric driver for copying, our current insights remain primarily correlational and bounded in scope. 

First, although empirical evidence supports that boundary constraints limit the student's deformation freedom, our framework lacks a formal analytical or topological proof. A more rigorous theoretical treatment is required. Future work could leverage high-dimensional geometric descriptors to dynamically track the student's score function during training~\cite{mflddiffusiongeom}, bypassing linear low-dimensional projections.

Second, our evaluation is restricted to compact datasets (MNIST, ImageNet-64). Validation on modern high-resolution text-to-image pipelines remains unproven, and scaling up these findings is a crucial next step. Finally, understanding which structural properties of high-dimensional data encourage copying, such as whether it relates to the teacher learning an approximate optimal transport map, offers a promising avenue for future investigation.

\section*{Acknowledgements}

S.L. is supported by a China Scholarship Council (CSC) - PAG Oxford Scholarship. M.B. is partially supported by the EPSRC Turing AI World-Leading Research Fellowship No. EP/X040062/1 and EPSRC AI Hub No. EP/Y028872/1.

\bibliography{references}
\bibliographystyle{icml2026_fogen}

\newpage

\appendix
\onecolumn

\section{Copying and Remapping Behaviors of DMD Students on Other Datasets}
\label{A. Copying on Other Datasets}
Similar to the toy chessboard experiment, on other low-dimensional synthetic datasets, the distribution-matching distilled student also exhibits \textit{substantial remapping} behavior, even with \(\|G(z)-\Phi_8(z)\| \gg \|\Phi_1(z)-\Phi_8(z)\|\) for some $z$.

\begin{figure}[ht]
  \vskip 0.1in
  \begin{center}
    \centerline{\includegraphics[width=0.75 \columnwidth]{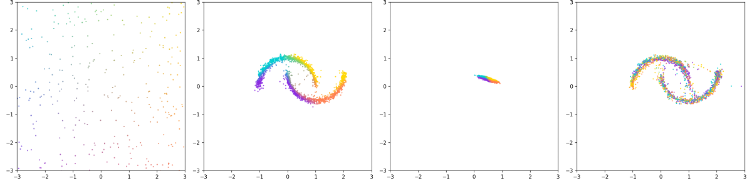}}

    \vskip 0.2in    
    \centerline{\includegraphics[width=0.75 \columnwidth]{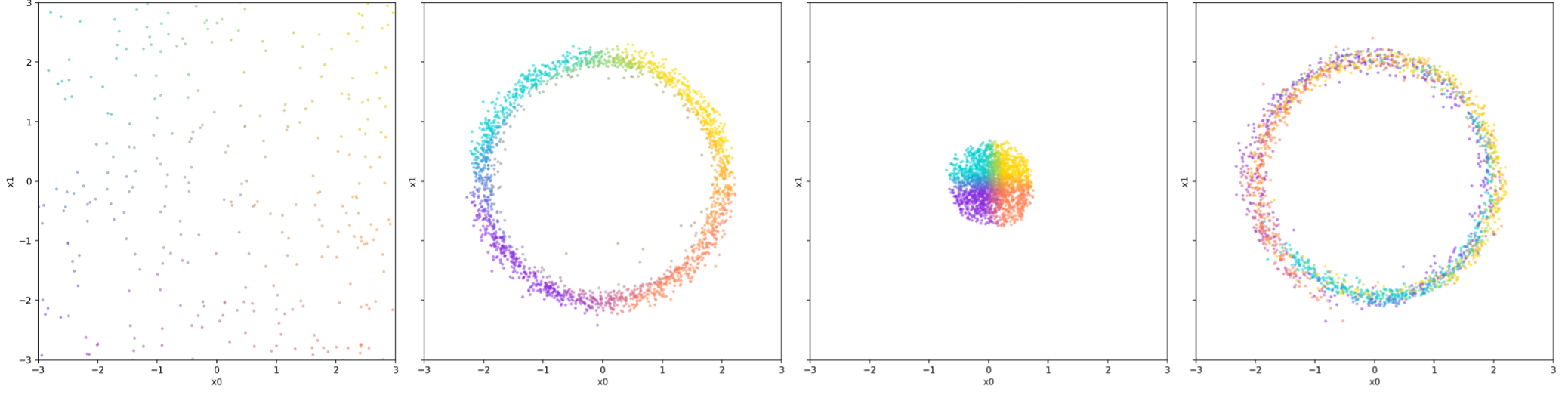}}
    \caption{Noise--data pairing results on low-dim additional artificial datasets. In each row, panels from left to right show the initial Gaussian noise \(z\), the teacher 8-step Euler samples \(\Phi_8(z)\), the teacher 1-step Euler samples \(\Phi_1(z)\), and the student one-step samples \(G(z)\). Points generated from the same initial noise seed \(z\) are assigned the same color across panels. \textbf{Top row:} Distillation results on the 2D double-moons dataset embedded in the first two coordinates of a four-dimensional ambient space. \textbf{Bottom row:} Distillation results on a Gaussian mixture dataset consisting of 32 isotropic four-dimensional Gaussian components with shared standard deviation \(0.2\). The component means are arranged on a circle of radius \(2\) in the first two coordinates of the ambient space.}
    \label{copying on artificial datasets}
  \end{center}
\end{figure}

In contrast, high-dimensional datasets typically exhibit \textit{significantly stronger copying behavior}, which becomes even more pronounced in \textit{conditional} generation settings (Figure~\ref{copying on high-dim datasets: t1t8t32s1 visualization}, \ref{copying on highdim datasets: pca}, and \ref{copying on high dim datasets: displacement compare}). We conjecture that conditioning enables the teacher to resolve finer geometric structures within each class separately, thereby imposing stronger constraints on the student during distillation and reducing its freedom to remap noise--data pairings while preserving the target distribution.

\begin{figure}[H]
\centering

\includegraphics[width=0.9\columnwidth]{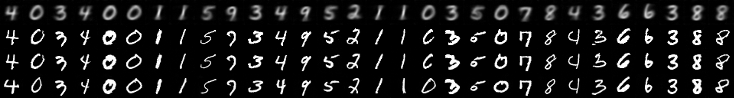}

\vskip 0.08in

\includegraphics[width=0.9\columnwidth]{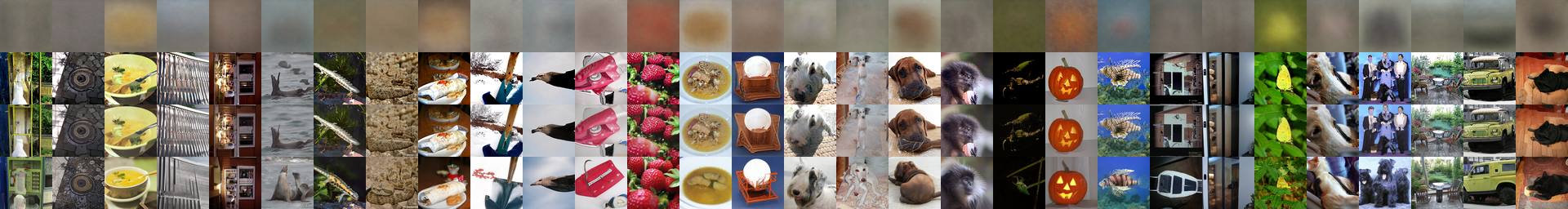}

\vskip 0.08in

\includegraphics[width=0.9\columnwidth]{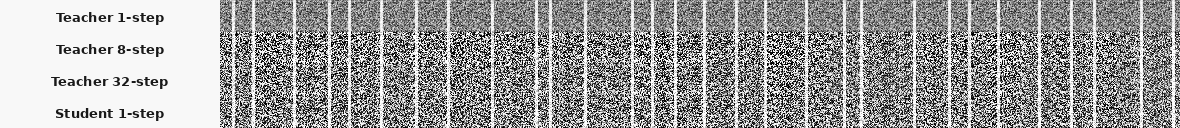}

\caption{
Distillation results on conditional high-dimensional datasets. Panels from top to bottom: conditional MNIST, conditional ImageNet64, and the conditional synthetic MLP-manifold dataset. Each panel contains thirty image quadruples generated from random initial noise seeds \(z\) and randomly assigned classes. From top to bottom within each panel are the teacher 1-step samples \(\Phi_1(z)\), teacher 8-step samples \(\Phi_8(z)\), teacher 32-step samples \(\Phi_{32}(z)\), and student one-step samples \(G(z)\). We use the teacher model provided by~\cite{DMD2} for the conditional ImageNet64 experiment, for deterministic teacher sampling we set Schurn = 0.
}
\label{copying on high-dim datasets: t1t8t32s1 visualization}
\end{figure}

\newpage 
We also remark that the copying behavior is not restricted to natural image datasets. We construct a synthetic high-dimensional dataset by applying a randomly initialized two-layer MLP to 16-dimensional Gaussian noise, followed by superimposing whitened stripes at different spatial locations according to randomly assigned class labels \(i=0,\dots,9\). This produces a distribution in \(\mathbb R^{32\times32\times1}\) supported on a 16-dimensional manifold with 10 conditional classes. We observe similarly strong copying behavior by the student on this dataset (Figure~\ref{copying on high-dim datasets: t1t8t32s1 visualization}, \ref{copying on highdim datasets: pca}, and \ref{copying on high dim datasets: displacement compare}).

\begin{figure}[ht]
\centering

\includegraphics[width=0.7 \columnwidth]{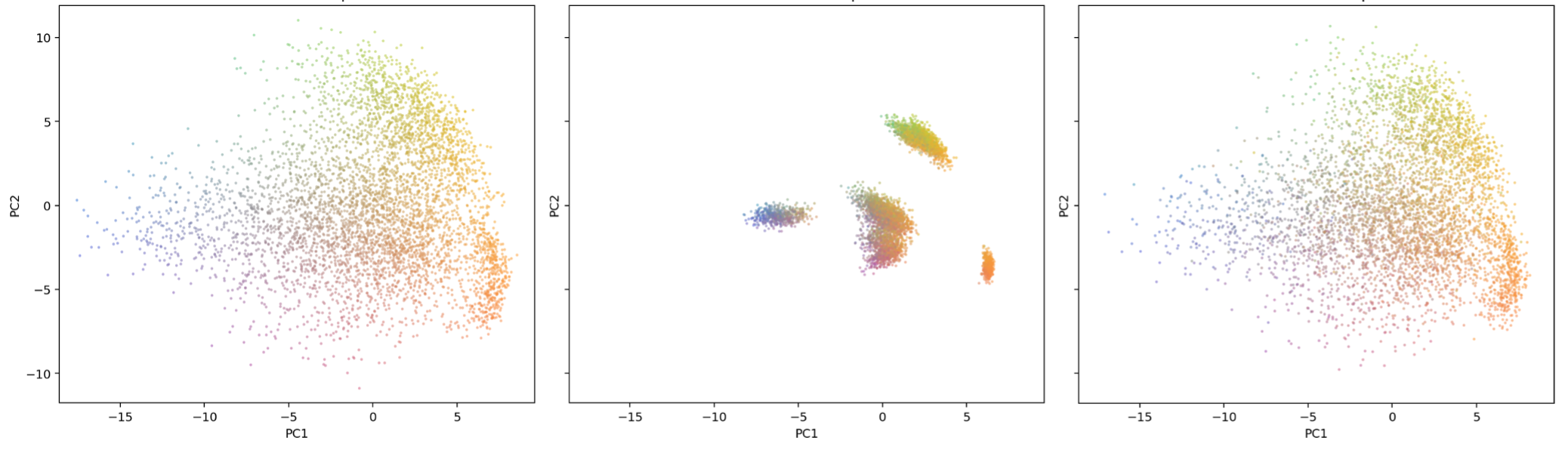}

\vskip 0.08in

\includegraphics[width=0.7\columnwidth]{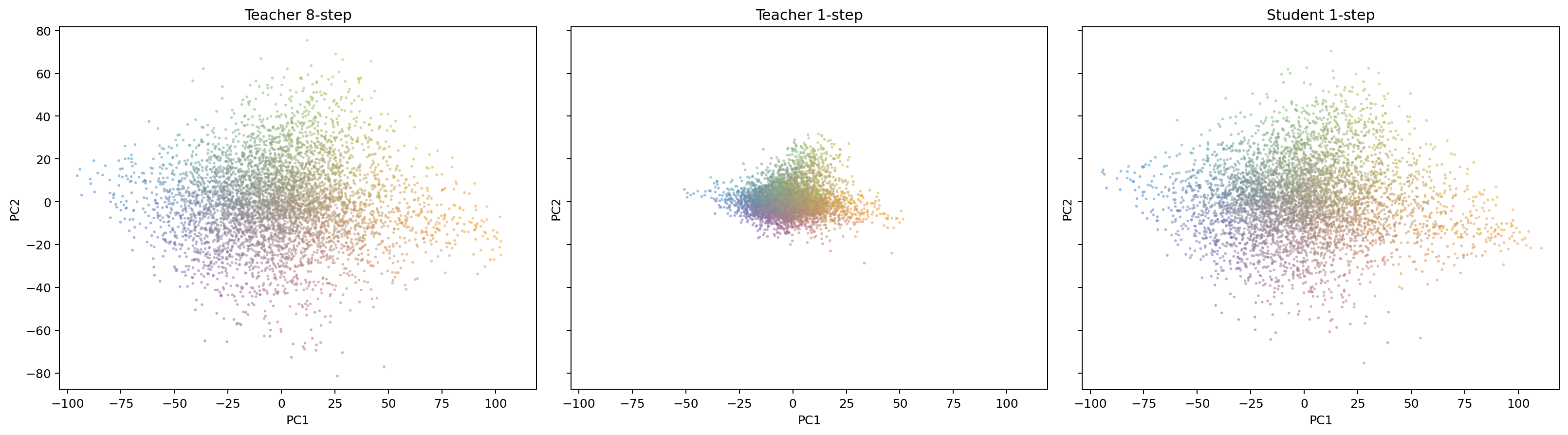}

\vskip 0.08in

\includegraphics[width=0.7\columnwidth]{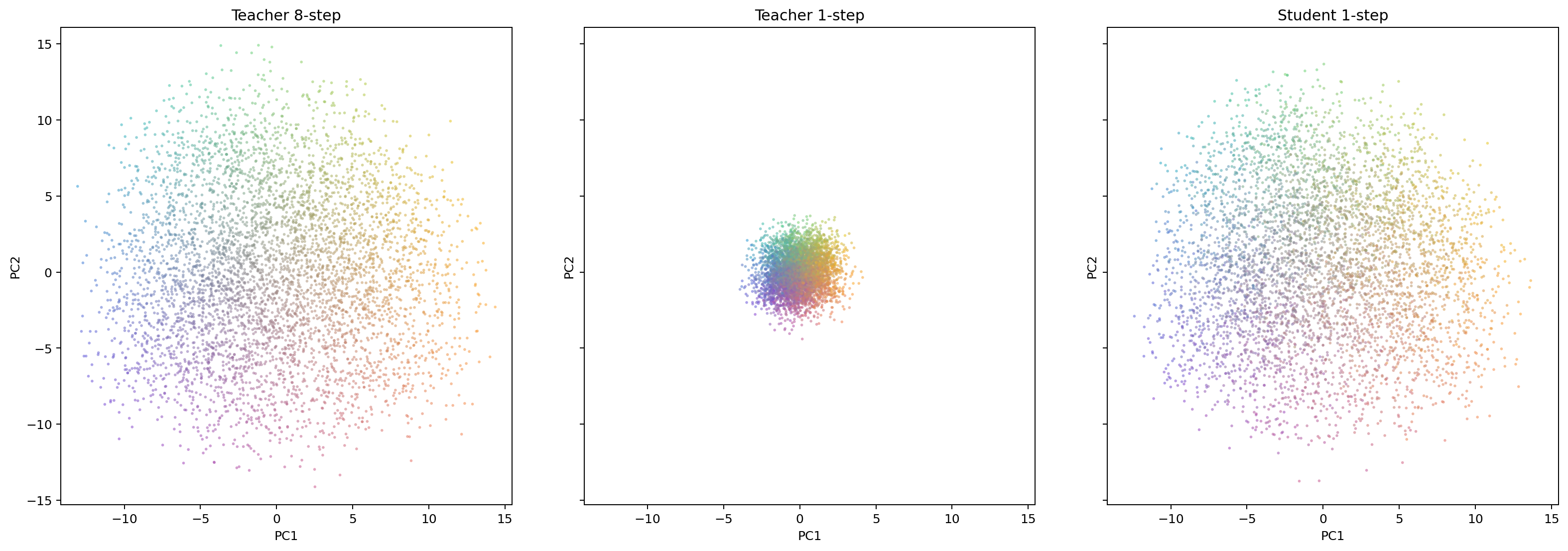}

\caption{
Visualization of noise--data pairing relationships on conditional high-dimensional datasets. Rows from top to bottom: conditional MNIST, conditional ImageNet64, and the conditional synthetic MLP-manifold dataset. Each row visualizes 2000 triplets \((\Phi_8(z), \Phi_1(z), G(z))\) projected onto the two leading principal components computed from the teacher 8-step samples \(\Phi_8(z)\).
}
\label{copying on highdim datasets: pca}
\end{figure}

\begin{figure}[H]
\centering

\includegraphics[width=0.25\textwidth]{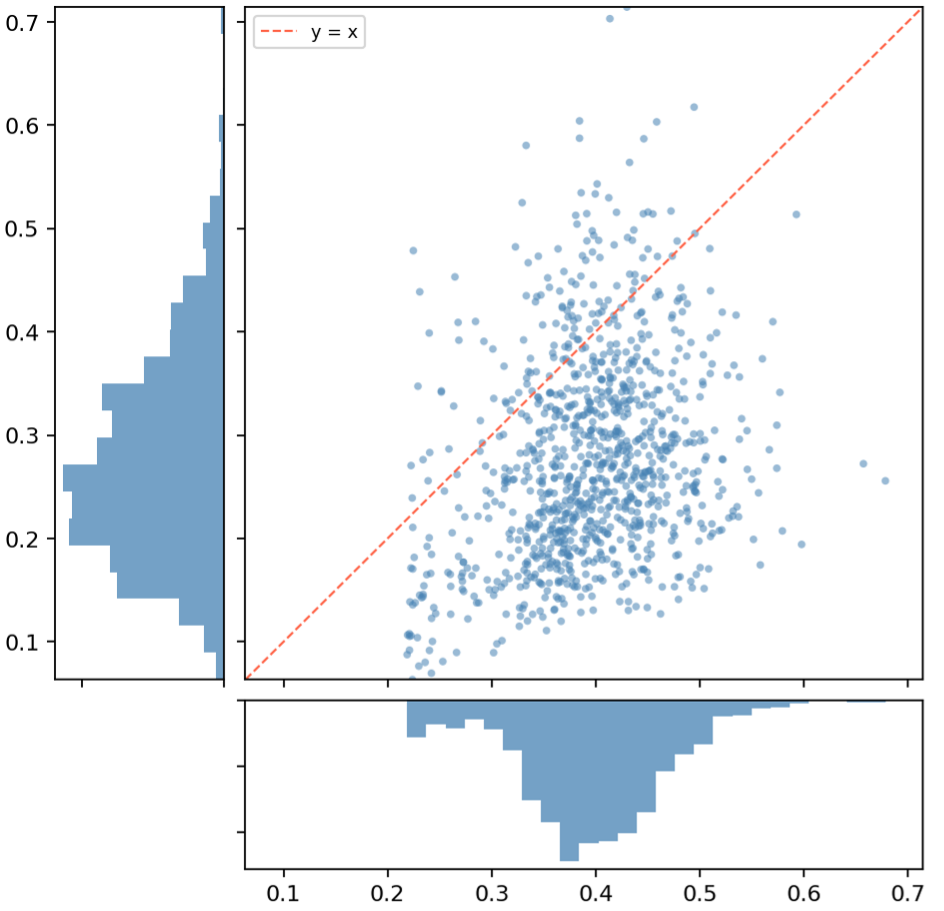}
\hspace{0.03in}
\includegraphics[width=0.26\textwidth]{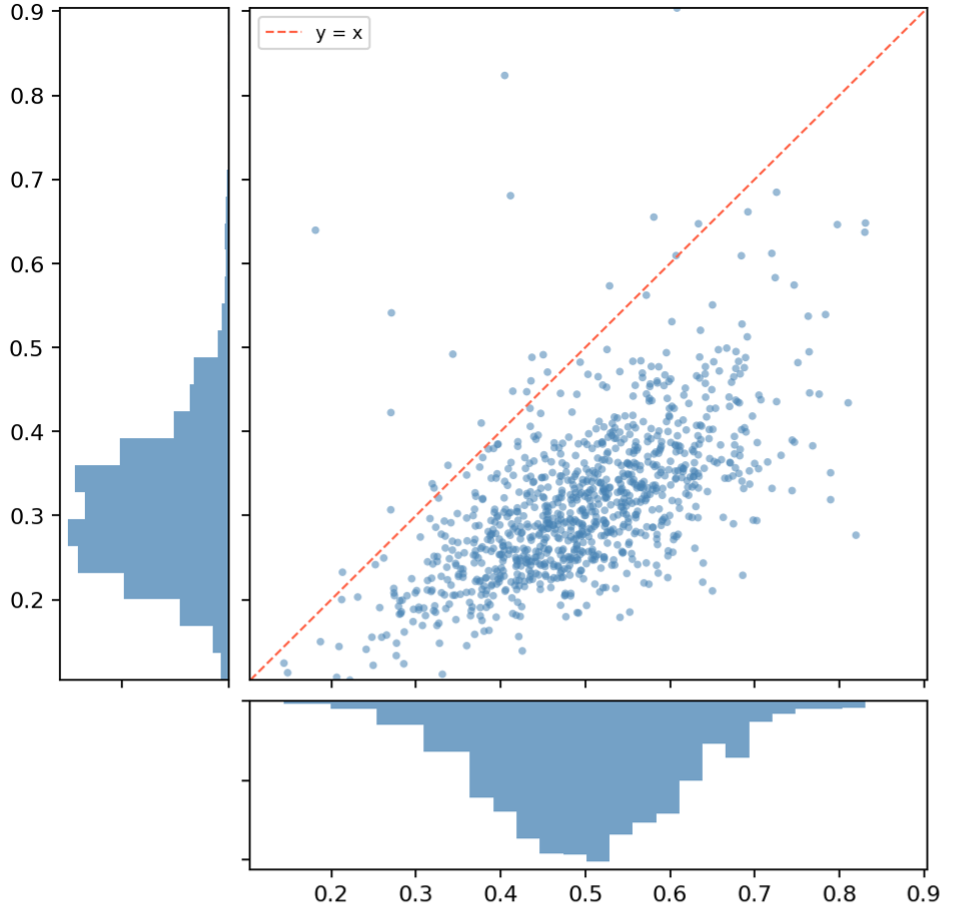}
\hspace{0.03in}
\includegraphics[width=0.25\textwidth]{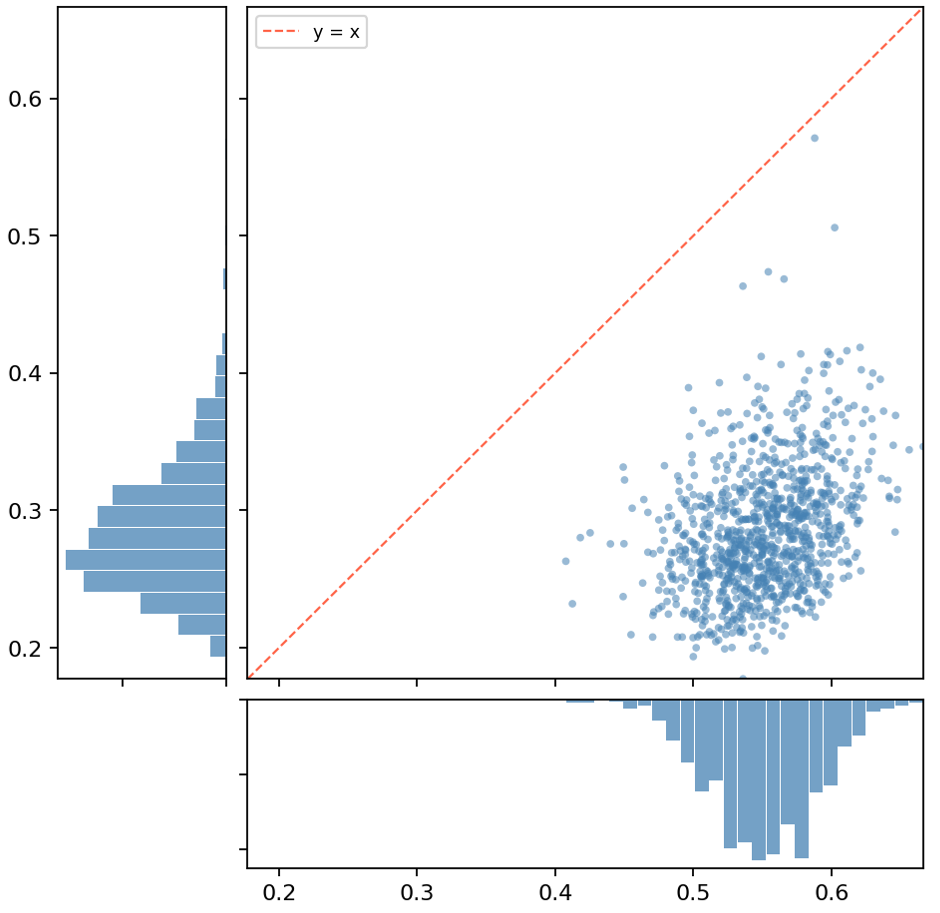}

\caption{
Comparison of student copying behavior on conditional high-dimensional datasets. \textbf{Left:} conditional MNIST. \textbf{Middle:} conditional ImageNet64. \textbf{Right:} conditional synthetic MLP-manifold dataset. In all plots, the horizontal axis shows \(\|\Phi_1(z)-\Phi_8(z)\|\), while the vertical axis shows \(\|G(z)-\Phi_8(z)\|\). Axes are proportionally rescaled where appropriate to reflect pixel-level or channel-wise discrepancy magnitudes. Across all three datasets, students distilled from sufficiently trained teachers exhibit strong copying behavior.
}
\label{copying on high dim datasets: displacement compare}
\end{figure}

\clearpage
\newpage
\section{Copying on Teachers Trained for Various Lengths on Unconditional MNIST}
\label{B. Copying of various length teacher on UncondMNIST}
We train teacher diffusion models on the unconditional MNIST dataset for 1024, 2048, 4096, and 8192 iterations, and distill corresponding student models initialized from these checkpoints. Each student is subsequently distilled for 50K iterations. We observe that students initialized from later-stage teacher checkpoints generally preserve a higher proportion of the teacher's original noise--data pairings.

\begin{figure}[H]
    \centering
        \includegraphics[width=0.8\textwidth]{Figures/MNISTUncond/t1024_NoGAN/memorizations/1.png}
        
        \vskip 0.08in

        \includegraphics[width=0.8\textwidth]{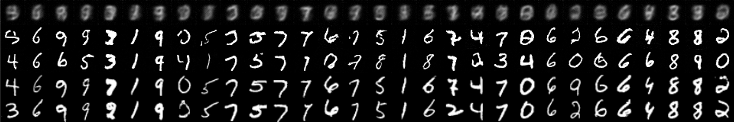}
        
        \vskip 0.08in

        \includegraphics[width=0.8\textwidth]{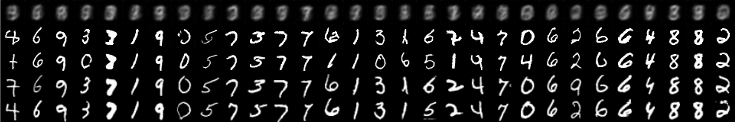}
        
        \vskip 0.08in
        
        \includegraphics[width=0.8\textwidth]{Figures/MNISTUncond/t8192_NoGAN/memorization/3.png}
        
        \vskip 0.2in
        
        \includegraphics[width=0.22\textwidth]{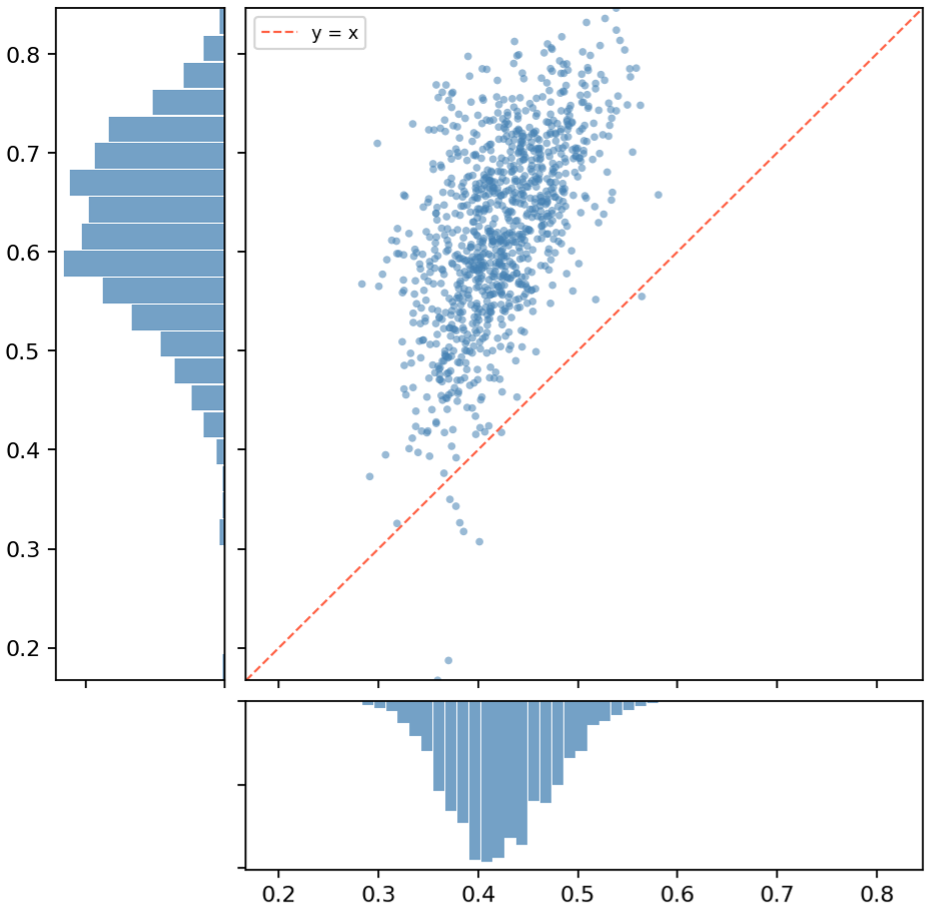}
        \hspace{0.01\textwidth}
        \includegraphics[width=0.22\textwidth]{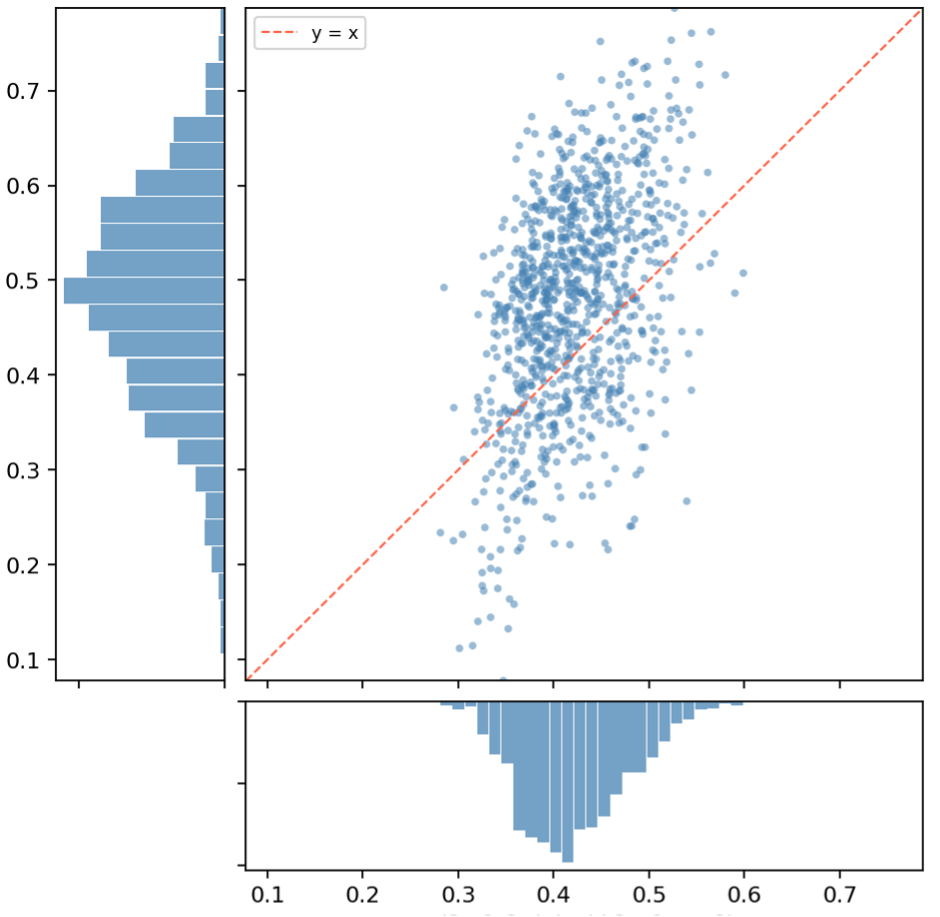}
        \hspace{0.01\textwidth}
        \includegraphics[width=0.22\textwidth]{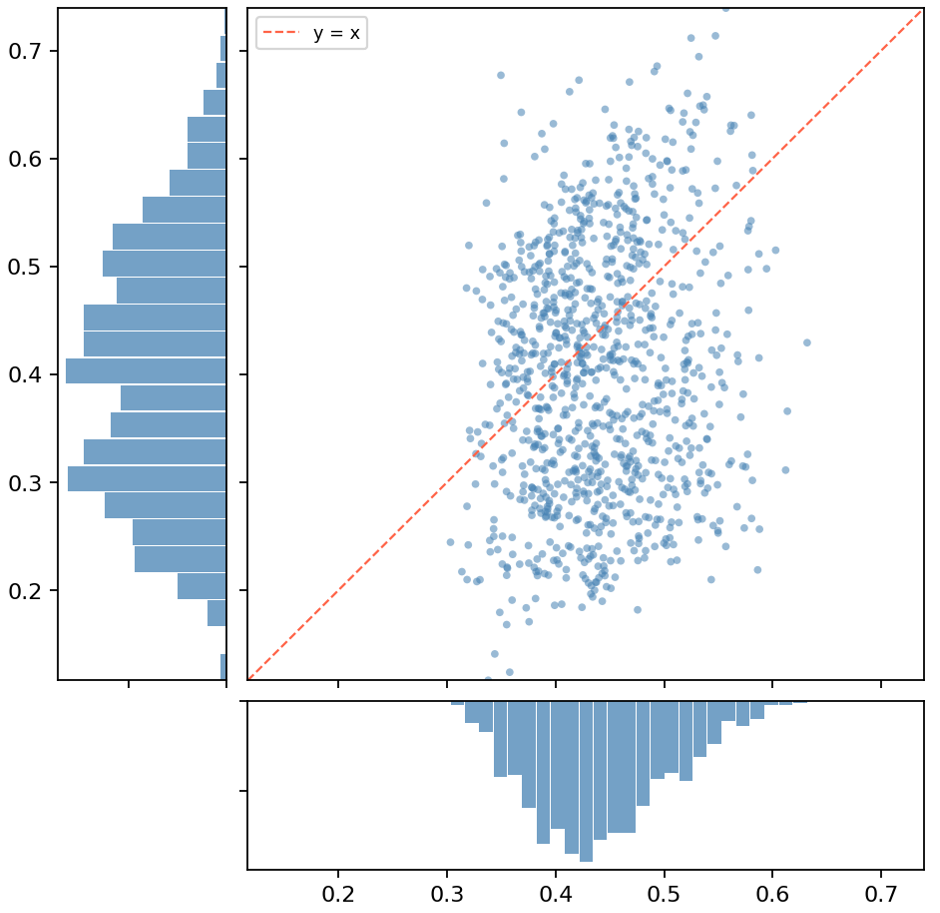}
        \hspace{0.01\textwidth}
        \includegraphics[width=0.22\textwidth]{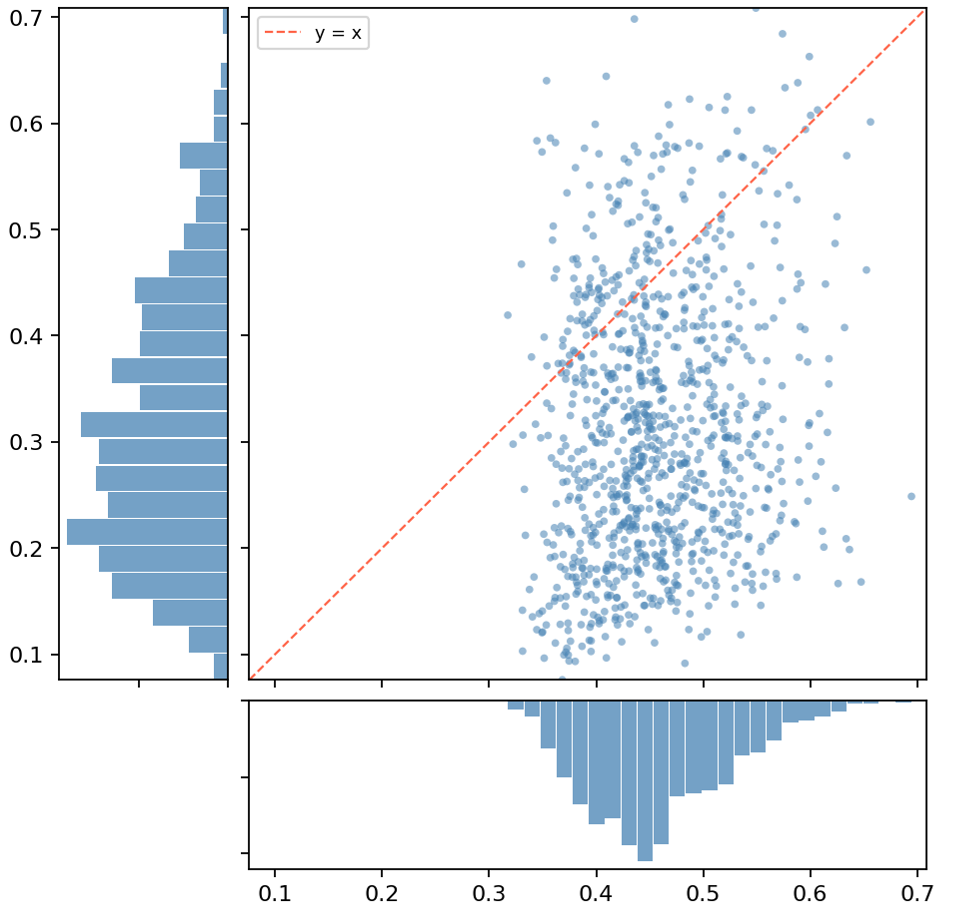}

        \caption{
        \textbf{Copying is more pronounced on longer-trained teachers. } Distillation results of teachers trained for various lengths on the unconditional MNIST dataset. \textbf{Top: }The four panels correspond to teachers trained for 1024, 2048, 4096, and 8192 iterations. For each panel, thirty image quintuples are generated from random initial noise seeds \(z\). From top to bottom the five rows show the teacher 1-step samples \(\Phi_1(z)\), teacher 8-step samples \(\Phi_8(z)\),  student one-step samples \(G(z)\), closest training point \(x^1(\Phi_8(z))\) and second closest training point \(x^2(\Phi_8(z))\). \textbf{Bottom:} Comparison of copying behaviors for students distilled from teacher checkpoints trained for 1024, 2048, 4096, and 8192 iterations. In each plot the horizontal axis shows \( \|\Phi_1(z)-\Phi_8(z)\|\) and the vertical axis shows \(\|G(z)-\Phi_8(z)\|\), both axes rescaled by $1/32$. The pairing inefficiencies of students distilled at these snapshots are $\Delta_E=1.05, 0.389, 0.237$ and $0.0367$ respectively, showing stronger copying on teachers trained for larger number of iterations.
        }
    \label{UncondMNIST 1248K longer train more likely copied}
\end{figure}

\clearpage
\section{Derivations}
\label{C.Derivations}
\begin{lemma}
\label{lem:usefullemma}
Let $G_\theta$ and $G_{\theta'}$ be two single-step student generators, potentially with different architectures (or with the same architecture but different parameters). Whenever $G_\theta(z) \stackrel{d}{=} G_{\theta'}(z)$, it follows that~$L_{DM}(\theta) = L_{DM}(\theta')$, even though in general $\nabla_\theta L_{DM}(\theta) \neq \nabla_\theta L_{DM}(\theta')$. Consequently, the stochastic optimization dynamics can drive students to converge toward different pairings with vastly different pairing inefficiencies $\Delta_E$, despite achieving similar distributional fidelity. 
\end{lemma}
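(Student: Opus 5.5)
The plan has three parts: show the loss depends only on the pushforward distribution, show by example that the gradient does not, and then exhibit two equally optimal students with different pairings.

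\textbf{Invariance of the loss.} By definition, $p_{\theta,t}=\mathbf{Law}(G_\theta(z)+\sigma(t)\varepsilon)$ with $\varepsilon$ independent of $z$. Hence $p_{\theta,t}=(G_\theta)_{\#}p_\varepsilon \ast \mathcal N(0,\sigma^2(t)\mathbf I)$, so $p_{\theta,t}$ depends on $\theta$ only through the pushforward measure $(G_\theta)_{\#}p_\varepsilon$. If $G_\theta(z)\stackrel{d}{=}G_{\theta'}(z)$, the two pushforwards coincide, so $p_{\theta,t}=p_{\theta',t}$ for every $t$. The teacher path $p_t$ and the weight $w(t)$ do not depend on the student. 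Therefore each integrand $w(t)\,\mathrm{KL}(p_{\theta,t}\|p_t)$ agrees for $\theta$ and $\theta'$, and integrating in $t$ gives $L_{DM}(\theta)=L_{DM}(\theta')$. Nothing here uses a shared architecture; we only need both generators to be driven by the same noise law $p_\varepsilon$. This is why the appendix version allows different architectures.

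\textbf{Non-invariance of the gradient.} The gradients live in different parameter spaces in general, so they cannot be equal as objects. Even with a shared parametrization, Equation~\ref{DM gradient} weights the score discrepancy $s_\psi-s$ at $x_t=G_\theta(z)+\sigma(t)\varepsilon$ by the Jacobian $\partial G_\theta(z)/\partial\theta$. That Jacobian depends on the pointwise map, not on its law. An explicit example is enough to show "in general $\neq$".
\begin{itemize}
\item Take $z\sim\mathcal N(0,1)$, a Gaussian teacher target $\mathcal N(0,1)$, and the linear students $G_\theta(z)=\theta z$ and $G_{\theta'}(z)=-\theta z$. They have the same law $\mathcal N(0,\theta^2)$.
\item Parametrize both by the same scalar $a$, as $G_a(z)=az$ and $\tilde G_a(z)=-az$.
\item The losses are then $L(a)$ and $\tilde L(a)=L(-a)$, since the law depends on $a^2$ either way. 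Their derivatives at $a$ are $L'(a)$ and $-L'(a)$, which differ whenever $L'(a)\neq 0$, e.g. whenever $a^2\neq1$. The KL terms between Gaussians are explicit, so $L'(a)\neq0$ can be checked directly.
\item Alternatively, note that the Jacobians $z$ and $-z$ differ.
\end{itemize}

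\textbf{Two optimal students with different pairings.} For the consequence, construct two global minimizers with different $\Delta_E$. Let $G$ reproduce the teacher, $G(z)=\Phi_K(z)$, so that $E_{DT}=0=E_{OT}$. To avoid a $0/0$ in $\Delta_E$, it is cleaner to compare $E_{DT}$ directly, or to compare both students against a fixed third reference. Let $G'(z)=\Phi_K(Rz)$, where $R$ is an orthogonal map (e.g. $z\mapsto -z$). Since $p_\varepsilon$ is isotropic, $Rz\stackrel{d}{=}z$, so $G'(z)\stackrel{d}{=}G(z)$. By the first part both attain the same, minimal, loss. However, $E_{DT}(\Phi_K,G')=\mathbb E\|\Phi_K(z)-\Phi_K(-z)\|^2>0$ for a non-even $\Phi_K$, while $E_{OT}=0$, so the pairing cost is maximally inefficient. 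The claim that SGD "can" reach either is informal; I would support it by noting that both points are global minima and that the differing gradient fields give different basins, rather than proving convergence.

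The main obstacle is only definitional: $\Delta_E$ is degenerate when $E_{OT}=0$, and "converge toward" is not a theorem. I would handle both by stating the consequence at the level of $E_{DT}$, or with a slightly perturbed target so that $E_{OT}>0$.
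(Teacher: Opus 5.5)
Your loss-invariance argument is the same as the paper's. Your construction $\Phi_K\circ R$ is a general version of the paper's example, in which the teacher maps onto the $\delta$-deformed circle $(1+\delta)x^2+y^2=1$ and the students rotate the noise by $\theta$ and project. The deformation $\delta$ is there precisely to make $E_{OT}>0$, so it is your ``slightly perturbed target.''

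\textbf{The gradient example fails as written.} In your reparametrization, $G_a(z)=az$ and $\tilde G_a(z)=-az$ have the same law $\mathcal N(0,a^2)$ for every $a$. By your own first part, $\tilde L\equiv L$, and therefore $\tilde L'(a)=L'(a)$. The slip is in the chain rule: $\frac{d}{da}L(-a)=-L'(-a)$, not $-L'(a)$, and since $L$ is even, $-L'(-a)=L'(a)$. The fallback that ``the Jacobians $z$ and $-z$ differ'' does not rescue it. Substituting $z\mapsto -z$ in Equation~\ref{DM gradient} shows the two expected gradients coincide exactly, so pointwise differences in $\partial G/\partial\theta$ need not survive the expectation. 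The paper's one-line justification has the same weakness.

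\textbf{The fix.} The lemma compares one parametrization at two parameter values. Take $\theta=a$ and $\theta'=-a$ in the single family $G_a(z)=az$. Then $L'(-a)=-L'(a)$. Moreover, $\partial_a\mathrm{KL}\bigl(\mathcal N(0,a^2+t^2)\,\|\,\mathcal N(0,1+t^2)\bigr)=a\bigl(\tfrac{1}{1+t^2}-\tfrac{1}{a^2+t^2}\bigr)$ has constant sign in $t$. Hence $L'(a)\neq 0$ for $a\notin\{0,\pm1\}$; note that you must also exclude $a=0$, not only $a^2=1$.

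Once corrected, your example is a better witness than the paper's. In the paper's rotation family the law of $G_\theta(z)$ does not depend on $\theta$, so $L_{DM}$ is constant in $\theta$. Its exact gradient therefore vanishes at both $\theta$ and $\theta+\pi$, and the paper's closing remark about differing gradients does not hold.

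\textbf{Making the $\Delta_E$ comparison non-degenerate.} You can do this within your general construction, without a special target. Take $G_1=c\,\Phi_K$ and $G_2=c\,\Phi_K\circ R$ with $c\neq 1$ close to $1$.
\begin{itemize}
\item $G_1$ and $G_2$ have the same law, hence the same $L_{DM}$. That law is within $W_2$-distance $|1-c|\,(\mathbb E\|\Phi_K(z)\|^2)^{1/2}$ of the teacher's.
\item The coupling $x\mapsto cx$ is the gradient of the convex function $\tfrac c2\|x\|^2$, so it is optimal. This gives $E_{OT}=(1-c)^2\,\mathbb E\|\Phi_K(z)\|^2=E_{DT}(\Phi_K,G_1)$, and hence $\Delta_E(\Phi_K,G_1)=0$ exactly.
\item $E_{OT}$ takes the same value for $G_2$, because it depends only on the laws.
\item $E_{DT}(\Phi_K,G_2)\to\mathbb E\|\Phi_K(z)-\Phi_K(Rz)\|^2>0$, so $\Delta_E(\Phi_K,G_2)\to\infty$.
\end{itemize}
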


\begin{proof}
    Given \(G_\theta(z)\stackrel d = G_{\theta'}(z)\) equal in distribution, since $\varepsilon\sim \mathcal N(0, \mathbf I)$ is independent of $z\sim \mathcal N(0,\sigma^2(T)\mathbf I)$, we have that \[G_\theta(z)+\sigma(t)\varepsilon \stackrel d = G_{\theta'}(z)+\sigma(t)\varepsilon.\] But by definition of the student probability path, this is just $p_{\theta,t}  = p_{\theta',t}$ equal in distribution for all $t\in T$. Noting that KL-divergence is only dependent on the distribution supplied, it follows that 
    
    \begin{align*}
        L_{\text{DM}}(\theta) & := \int w(t) \text{KL}(p_{\theta, t} \| p_t) \, \mathrm dt.  \\ &=\int w(t) \text{KL}(p_{\theta', t} \| p_t) \, \mathrm dt. \\ &=L_{\text{DM}}(\theta').
    \end{align*}
    Finally, \(\nabla_\theta L_{\mathrm{DM}}(\theta)\neq\nabla_\theta L_{\mathrm{DM}}(\theta')\) in general because $\partial G_\theta(z)/\partial \theta$ generally takes different values at different $\theta$ and $\theta'$. 

    To illustrate how two student models can achieve similar (identical) distributional fits while learning pairings with vastly different efficiencies, consider the following geometric construction. Let the target distribution $p_{\text{data}}$ be a singular measure in $\mathbb{R}^2$ supported on a slightly deformed circle $(1+\delta) x^2+y^2=1$, where $\delta \ll 1$. This distribution is defined by pushing forward the isotropic Gaussian noise $(x, y) \sim N(0, \sigma^2 I)$ via the mapping $T$:$$T(x, y) = \frac{(x, y)}{\sqrt{(1+\delta) x^2+y^2}}.$$
    
    Assume the teacher model has learned the distribution perfectly, such that $\Phi_K(x, y) := T(x, y)$. Now, consider a restricted class of student models $G_\theta$ which first applies a rotation $\theta$ to the noise space prior $\mathcal N(0,\sigma^2(T)\mathbf I)$ then projects it onto the unit circle: $$G_\theta(x, y) = \frac{(x \cos \theta - y \sin \theta, x \sin \theta + y \cos \theta)}{\sqrt{x^2+y^2}}.$$
    
    Because the Gaussian noise source is rotationally invariant, $G_\theta$ generates the same output distribution $p_{\text{data}}$ for any $\theta$. Consequently, both $\theta=0$ and $\theta=\pi$ are global optimizers of the distribution matching loss:$$L_\mathrm{DM}(\theta=0) = L_\mathrm{DM}(\theta=\pi)\approx 0.$$
    
    However, these two solutions exhibit fundamentally different pairing efficiencies:\begin{itemize}\item At $\theta=0$, the student's orientation aligns perfectly with the teacher's, resulting in~$\Delta_E(\Phi_K, G_0) \approx 0$, manifesting clear copying behavior.\item At $\theta=\pi$, while the generated distribution is identical, the student reverses the orientation of the mapping relative to the teacher, resulting in a significantly larger $\Delta_E(\Phi_K, G_\pi)\gg 0$.\end{itemize}

    We note this example also works to show that in general \(\nabla_\theta L_{\mathrm{DM}}(\theta)\neq\nabla_\theta L_{\mathrm{DM}}(\theta')\) for $\theta '=\theta +\pi$ for $\theta \notin\{0, \pi\}.$

\end{proof}

\clearpage

\begin{lemma}[Properties of Pairing Inefficiency]
The pairing inefficiency $\Delta_E$ is non-negative ($\Delta_E \geqslant 0$) and invariant under uniform scaling of all measures, i.e., $\Delta_E(c \Phi_K, c G) = \Delta_E(\Phi_K, G)$ for all $c > 0$, providing a robust measure for comparing copying behaviors across scales. Furthermore, the empirical estimator $\Delta_E^{(N)}$ is consistent, such that $\Delta_E^{(N)} \to \Delta_E$ almost surely as $N \to \infty$.
\end{lemma}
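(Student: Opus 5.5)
The plan is to handle the three claims separately, each by a direct argument from the definitions of $E_{OT}$ and $E_{DT}$. Throughout, assume $E_{OT}(\Phi_K,G)>0$, i.e.\ $p_\Phi\neq p_G$; otherwise $\Delta_E$ is undefined, or can be set to $0$ by convention. Also assume finite second moments of $p_\Phi$ and $p_G$, so that both energies are finite.

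\textbf{Non-negativity.} The key observation is that the distillation pairing is itself an admissible coupling. Let $\pi_{DT}:=(\Phi_K,G)_{\#}p_\varepsilon$ be the joint law of $(\Phi_K(z),G(z))$ with $z\sim p_\varepsilon$. Its marginals are exactly $p_\Phi$ and $p_G$, so $\pi_{DT}\in\Gamma(p_\Phi,p_G)$. By the change-of-variables formula,
\[
E_{DT}(\Phi_K,G)=\int\|x-y\|_2^2\,d\pi_{DT}(x,y)\geq E_{OT}(\Phi_K,G).
\]
Dividing by $E_{OT}>0$ gives $\Delta_E\geq 0$.

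\textbf{Scale invariance.} Replacing $(\Phi_K,G)$ by $(c\Phi_K,cG)$ multiplies the integrand of $E_{DT}$ by $c^2$. For $E_{OT}$, the map $(x,y)\mapsto(cx,cy)$ induces a bijection between $\Gamma(p_\Phi,p_G)$ and $\Gamma\bigl((c\,\cdot)_{\#}p_\Phi,(c\,\cdot)_{\#}p_G\bigr)$, and it scales every transport cost by $c^2$. Taking the minimum on both sides gives $E_{OT}(c\Phi_K,cG)=c^2E_{OT}(\Phi_K,G)$. The factor $c^2$ cancels in the ratio, so $\Delta_E(c\Phi_K,cG)=\Delta_E(\Phi_K,G)$.

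\textbf{Consistency.} I would define the estimator explicitly. Draw i.i.d.\ samples $z_1,\dots,z_N\sim p_\varepsilon$ and set
\[
E_{DT}^{(N)}=\frac1N\sum_i\|\Phi_K(z_i)-G(z_i)\|^2 .
\]
Let $E_{OT}^{(N)}$ be the discrete OT cost between the empirical measures $\hat p_\Phi^N=\frac1N\sum_i\delta_{\Phi_K(z_i)}$ and $\hat p_G^N=\frac1N\sum_i\delta_{G(z_i)}$. This cost is the minimum over permutations, which is the assignment problem used for the heatmaps. Then set $\Delta_E^{(N)}=E_{DT}^{(N)}/E_{OT}^{(N)}-1$.

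The numerator converges almost surely to $E_{DT}$ by the strong law of large numbers, since finite second moments give integrability. For the denominator, note that $E_{OT}=W_2^2(p_\Phi,p_G)$. By Varadarajan's theorem, $\hat p_\Phi^N\to p_\Phi$ weakly almost surely, and the SLLN applied to $\|x\|^2$ gives convergence of second moments. Together these yield $W_2(\hat p_\Phi^N,p_\Phi)\to0$ almost surely, and likewise for $G$. The triangle inequality for $W_2$ then gives $E_{OT}^{(N)}\to E_{OT}$ almost surely. Finally, since $E_{OT}>0$, the map $(a,b)\mapsto a/b-1$ is continuous near $(E_{DT},E_{OT})$, and the continuous mapping theorem gives $\Delta_E^{(N)}\to\Delta_E$ almost surely.

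The main obstacle is the OT-denominator step. It needs the metric and convergence properties of $W_2$, and it needs the standing assumptions to be stated explicitly: finite second moments and $E_{OT}>0$. The other two parts are essentially one-line arguments.
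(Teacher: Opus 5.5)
Your proof is correct. The first two parts match the paper's argument exactly: the distillation coupling $\pi_{DT}=(\Phi_K,G)_{\#}p_\varepsilon$ is admissible, so $E_{DT}\geq E_{OT}$, and the pushforward under $(x,y)\mapsto(cx,cy)$ scales both energies by $c^2$.

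For consistency you keep the paper's skeleton (Varadarajan's theorem followed by convergence in $W_2$), but you justify the $W_2$ step differently. The paper assumes the images of $\Phi_K$ and $G$ lie in a compact set, which it attributes to bounded nonlinearities, and uses continuity of $W_2$ under weak convergence there. You assume only finite second moments. You get $W_2(\hat p_\Phi^N,p_\Phi)\to 0$ from weak convergence plus SLLN convergence of second moments, and then pass to $E_{OT}^{(N)}$ by the triangle inequality for $W_2$. Your hypothesis is weaker: it holds for any generator of at most linear growth in $z$, such as one with a linear skip connection. So your argument does not rely on the paper's boundedness claim.

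You also make explicit two steps the paper leaves implicit. The first is the SLLN for the numerator $E_{DT}^{(N)}$. The second is the requirement $E_{OT}>0$, needed both for $\Delta_E$ to be defined and for $(a,b)\mapsto a/b-1$ to be continuous at the limit.

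The paper's compactness route is shorter, reducing the step to a single appeal to $W_2$ metrizing weak convergence on compact sets. The cost is a stronger assumption that is less obviously satisfied.
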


\begin{proof}
Let $\pi_{DT}$ denote the joint distribution of pairs $(\Phi_K(z), G(z))$ induced by the shared noise source $z \sim p_z$. By construction, $\pi_{DT}$ is a valid coupling in the set of all couplings $\Gamma(p_\Phi, p_G)$. Since the optimal transport cost $E_{OT}$ is defined as the infimum over this set, it follows that:
\[
E_{DT}(\Phi_K, G) = \int \|x - y\|_2^2 \, d\pi_{DT}(x, y) \geqslant \inf_{\pi \in \Gamma} \int \|x - y\|_2^2 \, d\pi(x, y) = E_{OT}(\Phi_K, G).
\]
Therefore, $$\Delta_E = E_{DT} / E_{OT} -1 \geqslant 0.$$

To show scale invariance, consider a constant $c > 0$. For any coupling $\pi \in \Gamma(p_\Phi, p_G)$, the scaled coupling $\pi_c = (c, c)_\# \pi$ belongs to $\Gamma(p_{c\Phi}, p_{cG})$. The scaled optimal transport cost is:
$$
\begin{aligned}
E_{OT}(c \Phi_K, c G) &= \min_{\pi_c \in \Gamma(p_{c\Phi}, p_{cG})} \int \|x - y\|_2^2 \, d\pi_c(x, y) \\
&= \min_{\pi \in \Gamma(p_{\Phi}, p_{G})} \int \|cx - cy\|_2^2 \, d\pi(x, y) \\
&= c^2 \min_{\pi \in \Gamma(p_{\Phi}, p_{G})} \int \|x - y\|_2^2 \, d\pi(x, y) \\
&= c^2 E_{OT}(\Phi_K, G)
\end{aligned}
$$
Similarly, $E_{DT}(c \Phi_K, c G) = \int \|cx - cy\|_2^2 \, d\pi_{DT}(x, y) = c^2 E_{DT}(\Phi_K, G)$. 
Because $\Delta_E$ is defined as a relative ratio, the factor $c^2$ cancels, yielding scale invariance $$\Delta_E(c \Phi_K, c G) = \Delta_E(\Phi_K, G).$$

Finally, by Varadarajan's Theorem, the empirical measures $p_{\Phi}^{(N)}$ and $p_{G}^{(N)}$ converge weakly to $p_{\Phi}$ and $p_{G}$ almost surely. Given that the images of $\Phi_K$ and $G$ are bounded within a subset of compact support (due to the bounded non-linearities of the generators), the $L^2$ Wasserstein distance is continuous with respect to weak convergence. Consequently, the empirical estimator $\Delta_E^{(N)}$ converges almost surely to the population value $\Delta_E$.
\end{proof}

\clearpage
\section{Implementation Details}
\textbf{Network Architectures and Preconditioning.} For our primary high-dimensional benchmarks, we utilize the conditional ImageNet-64 teacher model provided by \citet{DMD2}, which employs the ADM architecture \cite{dhariwal2021diffusion} with a base width of $C_{base}=192$ and $label\_dim=1000$. The student model is an architectural copy of this teacher, initialized directly from its pre-trained weights to observe the distillation phenomena. For the unconditional MNIST experiments ($32\times32$, 1-channel), we adapt the same ADM architecture but reduce the capacity to $C_{base}=64$ and set $label\_dim=0$ to ensure strictly unconditional distillation. Both image-based models share a consistent UNet configuration, featuring a channel multiplier of $[1, 2, 3, 4]$, three residual blocks per resolution, and self-attention layers at resolutions of $32, 16,$ and $8$. EDM preconditioning~\cite{EDM} is applied to scale inputs, outputs, and skip connections by $\sigma$-dependent factors, maintaining unit variance throughout the distillation process. Finally, for the synthetic experiments on the 2D checkerboard manifold embedded in 4D space, we utilize a MLP architecture with $5$ hidden layers of $384$ hidden units, with 32-dimensional Fourier time embedding. 

\textbf{Distillation Dynamics and Optimization. }Student models are initialized from their respective pre-trained teacher weights. A central feature of our distillation is the $1:5$ update ratio between the generator ($G_\theta$) and the fake score model ($s_\psi$). The fake score model is optimized at every iteration to provide a high-quality surrogate of the evolving student's actual score $\nabla_x \log ((G_\theta(z)\ast \mathcal N(0,t))(x))$ while the generator is updated via the distribution matching loss every five iterations. We utilize the Karras noise schedule~\cite{EDM} with parameters $\sigma_{\min}=0.002, \sigma_{\max}=80,$ and $\rho=7$. To ensure numerical stability and avoid training on uninformative noise levels at the extreme ends of the SDE trajectory, timesteps are sampled from a restricted interval of $[2\%, 98\%]$. All models are trained using the AdamW optimizer with a learning rate of $2\times10^{-6}$, weight decay of $0.01$, and a $500$-step linear warmup. Distillation for ImageNet-64 is conducted on a single NVIDIA H100 GPU for $50,000$ iterations using batch size of 32. MNIST and synthetic toy experiments are performed on the same GPU with batch sizes of $32$ and $512$ respectively.

\section{Additional Figures}
\label{D. Additional Figures}
\begin{figure}[ht]
  \vskip 0.1in
  \begin{center}
    \centerline{\includegraphics[width=0.6 \columnwidth]{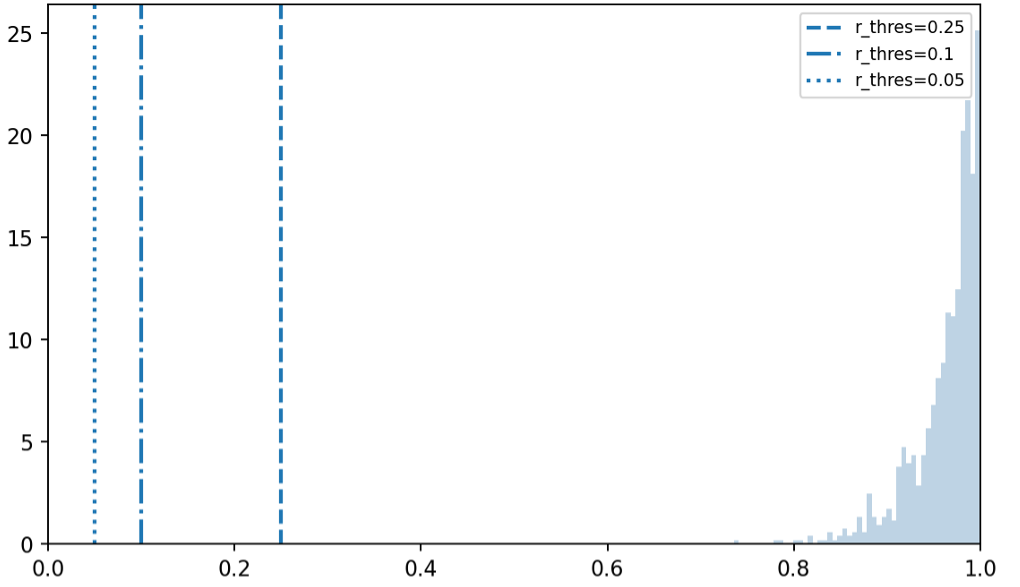}}

    \caption{Distribution of memorization distance ratios $r(\Phi_8(z)):=\|x^2(\Phi_8(z))-\Phi_8(z)\| /\|x^1(\Phi_8(z))-\Phi_8(z)\|$ for the teacher model trained for 8192 iterations on unconditional MNIST.}
    \label{teacher do not memorize}
  \end{center}
\end{figure}

This shows that the teacher model is not memorizing any of the training datapoints.

\end{document}